\newtheorem{theorem}{Theorem}
\newtheorem{lemma}{Lemma}
\tikzstyle{startstop} = [rectangle, rounded corners, minimum width=3cm, minimum height=1cm, text centered, draw=black, fill=red!30]
\tikzstyle{process} = [rectangle, minimum width=3cm, minimum height=1cm, text centered, draw=black, fill=blue!30]
\tikzstyle{io} = [trapezium, trapezium left angle=70, trapezium right angle=110, minimum width=3cm, minimum height=1cm, text centered, draw=black, fill=green!30]
\tikzstyle{arrow} = [thick,->,>=stealth]
\definecolor{codegreen}{rgb}{0,0.6,0}
\definecolor{codegray}{rgb}{0.5,0.5,0.5}
\definecolor{codepurple}{rgb}{0.58,0,0.82}
\definecolor{backcolour}{rgb}{0.95,0.95,0.92}
\lstdefinestyle{mystyle}{
    backgroundcolor=\color{backcolour},
    commentstyle=\color{codegreen},
    keywordstyle=\color{magenta},
    numberstyle=\tiny\color{codegray},
    stringstyle=\color{codepurple},
    basicstyle=\footnotesize\ttfamily,
    breakatwhitespace=false,
    breaklines=true,
    captionpos=b,
    keepspaces=true,
    numbers=left,
    numbersep=5pt,
    showspaces=false,
    showstringspaces=false,
    showtabs=false,
    tabsize=2
}
\providecommand{\U}[1]{\protect\rule{.1in}{.1in}}
\newtheorem {proposition}{Proposition}[section]
\newtheorem {corollary}{Corollary}[section]
\newtheorem{definition}{Definition}[section]
\newcommand{\E}{\mathbb{E}}
\newenvironment{proof}[1][Proof]{\textbf{#1.} }{\
\rule{0.5em}{0.5em}}
\newcommand{\R}{\mathbf{R}}
\newcommand{\Var}{\mathrm{Var}}
\newcommand{\Pbb}{\mathbb{P}}
\newcommand{\Unif}{\mathrm{Unif}}
\newcommand{\ind}[1]{\mathbb{I}\{#1\}}
\newcommand{\argmax}{\mathop{\mathrm{argmax}}}
\newcommand{\KL}{\mathrm{KL}}
\newcommand{\dd}{\,\mathrm{d}}
\begin{document}
\date{}
\title{\Large \textbf{Deep Copula Classifier: Theory, Consistency, and Empirical Evaluation}}
\vspace{1ex}
\author{Agnideep Aich${ }^{1}$\thanks{Corresponding author: Agnideep Aich, \texttt{agnideep.aich1@louisiana.edu}, ORCID: \href{https://orcid.org/0000-0003-4432-1140}{0000-0003-4432-1140}}
 \hspace{0pt}, Ashit Baran Aich${ }^{2}$ \hspace{0pt}
\\ ${ }^{1}$ Department of Mathematics, University of Louisiana at Lafayette, \\ Lafayette, LA, USA. \\  ${ }^{2}$ Department of Statistics, Formerly of Presidency College, \\ Kolkata, India }
\date{}
\maketitle
\thispagestyle{empty}
\vspace{-20pt}

\begin{abstract}
We present the Deep Copula Classifier (DCC), a class-conditional generative model that separates marginal estimation from dependence modeling using neural copula densities. DCC is interpretable, Bayes-consistent, and achieves excess-risk $O(n^{-r/(2r+d)})$ for $r$-smooth copulas. In a controlled two-class study with strong dependence ($|\rho|=0.995$), DCC learns Bayes-aligned decision regions. With oracle or pooled marginals, it nearly reaches the best possible performance (accuracy $\approx 0.971$; ROC-AUC $\approx 0.998$). As expected, per-class KDE marginals perform less well (accuracy $0.873$; ROC-AUC $0.957$; PR-AUC $0.966$). On the Pima Indians Diabetes dataset, calibrated DCC ($\tau{=}1$) achieves accuracy $0.879$, ROC-AUC $0.936$, and PR-AUC $0.870$, outperforming Logistic Regression, SVM (RBF), and Naive Bayes, and matching Logistic Regression on the lowest Expected Calibration Error (ECE). Random Forest is also competitive (accuracy $0.892$; ROC-AUC $0.933$; PR-AUC $0.880$). Directly modeling feature dependence yields strong, well-calibrated performance with a clear probabilistic interpretation, making DCC a practical, theoretically grounded alternative to independence-based classifiers.

\end{abstract}

\textbf{Keywords:}
Deep Copula Classifier, Neural Copulas, Generative Classification, Feature Dependence, Bayes Consistency

\section{Introduction}

Classification, the assignment of labels based on features, is a key part of machine learning. Classical methods such as logistic regression \citep{cox1958regression}, support vector machines \citep{cortes1995support}, and naive Bayes \citep{mccallum1998comparison} work well in many cases but depend on certain assumptions. For example, naive Bayes assumes features are independent, linear discriminant analysis requires features to be Gaussian with the same covariance, and neural networks capture feature interactions through weight matrices rather than explicit probabilistic models.

In real-world situations, features often interact in complex, nonlinear ways. For example, financial indicators can move together in extreme ways that simple correlation does not capture \citep{embrechts2002correlation}. In medical diagnostics, biomarkers such as blood pressure and cholesterol may have nonlinear relationships that are important for predicting disease. If these interdependencies are ignored, classifier performance can suffer. Copulas address this issue by separating the marginal distributions from the dependence structure among variables \citep{sklar1959fonctions}. While traditional parametric copulas like Gaussian, Clayton, and Gumbel, as well as newer Archimedean types \citep{aich2025two}, can model basic patterns, they struggle with high-dimensional data or more complex dependencies. Neural copulas \citep{ling2020deep, wilson2010copula, gordon2020combining} use deep learning to adapt and handle these challenges more effectively.

We address this gap with the Deep Copula Classifier (DCC). For each class, DCC models individual features using their marginal distributions and learns how features depend on each other using neural network parameterized copulas. This generative method enables measuring how features vary together, allowing classifiers to leverage these dependency patterns. Our theoretical contributions include the following:
\begin{enumerate}
    \item \textbf{Novel Framework}: Integration of deep copulas into a class-conditional generative classifier.
    \item \textbf{Consistency Proof}: DCC converges to Bayes-optimality under standard conditions.
    \item \textbf{Rate Quantification}: $O(n^{-r/(2r+d)})$ excess risk for $r$-smooth copulas.
    \item \textbf{Interpretability}: Direct inspection of learned copula dependencies.
    \item \textbf{Extensions Roadmap}: High-dimensional, semi-supervised, and streaming adaptations.
\end{enumerate}

We validate DCC with two studies using a single, fair protocol. All methods use identical fixed data splits. No hyperparameter tuning is performed (defaults only). A held-out calibration subset is used for Platt calibration for all models, including DCC. Models are fit on the training portion only, calibration is learned on the calibration subset, and a single evaluation is run once on the held-out test set.

\begin{itemize}
    \item \emph{Experiment~1 (synthetic dependence).} On correlated Gaussians with $|\rho|=0.995$, DCC learns decision regions aligned with the Bayes boundary. With the same hyperparameters across marginal strategies, oracle and pooled marginals achieve near-ceiling performance (accuracy $\approx 0.971$, ROC–AUC $\approx 0.9975$, PR–AUC $\approx 0.9976$). Per-class KDE marginals underperform due to finite-sample marginal estimation (accuracy $0.873$, ROC–AUC $0.957$, PR–AUC $0.966$), highlighting the role of stable marginal estimation when dependence drives class separation.
    \item \emph{Experiment~2 (PIMA).} On the Pima Indians Diabetes dataset, with per-class median imputation, mild per-class winsorization, fixed train/calibration/test splits, and shared Platt calibration, calibrated DCC with $\tau{=}1$ attains Accuracy $=0.879$, ROC–AUC $=0.936$, and PR–AUC $=0.870$. Random Forest is strongest on Accuracy and PR–AUC (Accuracy $=0.892$, ROC–AUC $=0.933$, PR–AUC $=0.880$), while DCC attains the highest ROC–AUC. Logistic Regression, SVM (RBF), and Gaussian Naïve Bayes trail DCC on ROC–AUC ($0.855$, $0.891$, $0.861$, respectively). Reliability curves show DCC and Logistic Regression achieve the lowest and essentially tied Expected Calibration Error on the test set.
\end{itemize}

Tree ensembles are strong performers on small tabular datasets. DCC provides a clear, interpretable alternative that captures feature interactions, produces well-calibrated likelihood-based predictions, and outperforms models that assume features are independent. By combining rigorous copula theory with flexible neural networks, DCC offers a robust approach for classification tasks where understanding dependencies is critical, especially in scientific or high-stakes applications.

The rest of the paper is structured as follows. Section~\ref{sec:related} covers copula basics, notation, and how marginals and dependence are separated. Section~\ref{sec:notation} explains the notation used in the paper. Section~\ref{sec:theoretical} presents the Deep Copula Classifier (DCC). Section~\ref{sec:algorithm} describes the algorithms. Section~\ref{sec:experiments} shares the results of our empirical studies. The paper ends with Section~\ref{sec:con}, which summarizes our findings and suggests directions for future research.

\section{Related Work}
\label{sec:related}

\subsection{Copula Theory and Foundations}
The copula framework, formalized by Sklar's theorem \citep{sklar1959fonctions}, makes it possible to separate marginal distributions from joint dependence structures. Modern works, such as \citep{nelsen2006introduction}, show that copulas are essential tools in multivariate analysis. There has been significant progress in several areas. Nonparametric estimation methods, like kernel-based copula density estimators \citep{omelka2009improved} and Empirical Bayes approaches \citep{lu2021nonparametric}, allow for flexible inference without strict distributional assumptions. For high-dimensional data, vine copulas \citep{nagler2017nonparametric} provide a scalable way to break down complex dependencies into simpler bivariate copulas. Copulas have also become more useful in generative modeling, including copula-based variational inference \citep{tran2015copula} and optimal transport-based methods \citep{chi2021approximate}, which have broadened their role in probabilistic learning.

\subsection{Generative Classification Paradigms}
Traditional generative classifiers, such as naive Bayes \citep{mccallum1998comparison}, focus on making models easy to work with, often by assuming that features are conditionally independent. Modern deep generative methods try to overcome this limitation in different ways. For example, models like variational autoencoders (VAEs) \citep{kingma2013auto} and generative adversarial networks (GANs) \citep{goodfellow2014generative} can capture how features interact by using nonlinear latent spaces, which helps them model complex dependencies. Some models, such as neural graphical model hybrids \citep{shrivastava2023neural}, make explicit assumptions about conditional independence to balance interpretability with the strengths of deep learning. The Deep Copula Classifier (DCC) takes a different approach by directly modeling how features depend on one another using copulas while maintaining the full generative framework. This offers a middle ground between strict parametric models and more opaque neural network methods.

\subsection{Neural Copula Architectures}
Recent progress in neural dependence modeling has led to more flexible and expressive copula-based models. Deep Archimedean copulas \citep{ling2020deep} and Gaussian process-based copula processes \citep{wilson2010copula} now allow for detailed representations of complex multivariate dependencies. For adaptive estimation, nonparametric vine copulas with neural pair-copula blocks can capture localized interactions in high-dimensional settings \citep{nagler2017nonparametric}. Hybrid learning approaches, such as neural copula processes \citep{gordon2020combining}, combine generative and discriminative goals to support robust prediction and inference. In contrast to earlier neural copula models \citep{ling2020deep}, DCC uses class-conditional copulas with clear convergence guarantees, making dependency modeling more interpretable for classification tasks.

Our work extends these foundations by developing:

\begin{enumerate}
    \item Class-specific copula networks with theoretical consistency guarantees

\item Explicit finite-sample convergence rates under smoothness conditions

\item Architectural constraints ensuring valid density outputs
\end{enumerate}
This bridges the gap between empirical neural copula methods and rigorous statistical learning theory. 

In the next section, we introduce the notations used in this work.

\section{Notation} \label{sec:notation}
In this section, we introduce the notation used throughout our work.

\paragraph{Basic objects.}
Let $d$ be the number of features and $K$ the number of classes.
We write $[d]=\{1,\dots,d\}$ and $[K]=\{1,\dots,K\}$.
Random vectors are bold: $\mathbf{X}=(X_1,\dots,X_d)\in\R^d$.
The label $Y\in[K]$.
Training data are $\{(\mathbf{X}^{(i)},Y^{(i)})\}_{i=1}^n$ i.i.d.\ from the joint law $P_{X,Y}$.
Class counts are $n_y=\lvert\{i:Y^{(i)}=y\}\rvert$ and the empirical prior is $\hat\pi_y=n_y/n$; the population prior is $\pi_y=\Pbb(Y=y)$.

\paragraph{Distributions, expectations, probabilities.}
For a random variable $Z$, $\E[Z]$ is expectation, $\Var(Z)$ variance, and $\Pbb(\cdot)$ probability.
For clarity we index the measure if needed, e.g.\ $\E_X[\cdot]$ or $\E_{U\sim\Unif([0,1]^d)}[\cdot]$.
All logarithms are natural.

\paragraph{Class-conditional laws and marginals.}
For each class $y\in[K]$, the class-conditional density of $\mathbf{X}$ is
$p_y(\mathbf{x})=f_{\mathbf{X}\mid Y=y}(\mathbf{x})$.
Its $i$th marginal CDF and density are
$F_{i\mid y}(x_i)=\Pbb(X_i\le x_i\mid Y=y)$ and $f_{i\mid y}(x_i)=\frac{\dd}{\dd x_i}F_{i\mid y}(x_i)$.
The support of $X_i\mid Y=y$ is $\mathcal{X}_{i\mid y}\subset\R$.

\paragraph{Copulas.}
For each $y$, the copula $C_y:[0,1]^d\to[0,1]$ couples the marginals and admits density $c_y=\partial^d C_y/\partial u_1\cdots\partial u_d$.
Sklar’s representation reads
\[
p_y(\mathbf{x})
= c_y\!\big(F_{1\mid y}(x_1),\dots,F_{d\mid y}(x_d)\big)\;
\prod_{i=1}^d f_{i\mid y}(x_i).
\]
The PIT (probability integral transform) variables are
$U_{i\mid y}=F_{i\mid y}(X_i)\in[0,1]$, so $\mathbf{U}_{\mid y}=(U_{1\mid y},\dots,U_{d\mid y})\sim C_y$ with uniform margins.

\paragraph{Neural copula parameterization (DCC).}
For class $y$, let $NN_y(u;\theta_y):[0,1]^d\to(0,\infty)$ be a positive network (e.g., ReLU hidden layers with softplus output). We use the \emph{normalized} density
\[
\tilde c_y(u;\theta_y)\;=\;\frac{NN_y(u;\theta_y)}{\int_{[0,1]^d} NN_y(v;\theta_y)\,\dd v},
\]
approximating the denominator by Monte Carlo with $M$ i.i.d.\ draws $V^{(m)}\sim \Unif([0,1]^d)$:
\[
\widehat Z_y(\theta_y) \;=\; \frac{1}{M}\sum_{m=1}^M NN_y\!\big(V^{(m)};\theta_y\big),
\qquad
\tilde c_y(u;\theta_y)\approx \frac{NN_y(u;\theta_y)}{\widehat Z_y(\theta_y)}.
\]
The DCC class-conditional density estimate is
\[
\widehat p_y(\mathbf{x})\;=\;\tilde c_y\!\big(\hat{\mathbf{u}};\hat\theta_y\big)\;\prod_{i=1}^d \widehat f_{i\mid y}(x_i),
\qquad
\hat{\mathbf{u}}=\big(\widehat F_{1\mid y}(x_1),\dots,\widehat F_{d\mid y}(x_d)\big).
\]

\paragraph{Decision rule and risks.}
Given priors, the Bayes rule is
$Y^*(\mathbf{x})=\argmax_{y\in[K]} \pi_y\,p_y(\mathbf{x})$.
The DCC classifier is $\widehat Y_n(\mathbf{x})=\argmax_{y} \hat\pi_y\,\widehat p_y(\mathbf{x})$.
Misclassification risk is $R(\widehat Y_n)=\Pbb(\widehat Y_n(\mathbf{X})\neq Y)$ and the excess risk is
$E(n)=R(\widehat Y_n)-R(Y^*)$.

\paragraph{Estimators.}
Hats ($\widehat{\cdot}$) denote sample-based estimators:
$\widehat F_{i\mid y}$ (smoothed CDF), $\widehat f_{i\mid y}$ (KDE), $\tilde c_y(\cdot;\hat\theta_y)$ (neural copula),
$\widehat p_y$, and $\hat\pi_y$.
We use $M$ for the Monte Carlo sample size when estimating the normalizer.

\paragraph{Smoothness and function classes.}
The copula density $c_y$ is assumed to belong to the Sobolev class $W_2^r([0,1]^d)$ with $r>d/2$.
We write $L$ for network depth, $W$ for width, and $S_n$ (``size'') for the total parameter count of the sieve at sample size $n$.

\paragraph{Norms and asymptotic notation.}
For a function $g$, $\|g\|_1=\int |g|$ and $\|g\|_\infty=\sup |g|$.
We use $O(\cdot)$, $o(\cdot)$, $O_p(\cdot)$ and $o_p(\cdot)$ in the usual sense; generic positive constants $C,c$ may change from line to line and do not depend on $n$.
Unless stated otherwise, limits are as $n\to\infty$.
``w.p.~1’’ means with probability 1.

\paragraph{Indicators and miscellaneous.}
$\ind{A}$ is the indicator of event $A$.
$\mathrm{softplus}(t)=\log(1+e^{t})$.
All integrals over $[0,1]^d$ use Lebesgue measure.
``a.e.'' means almost everywhere.
``u.c.b.'' denotes a uniform (in argument) constant bound when used.

\paragraph{Temperature and calibration (used in experiments).}
We optionally introduce a temperature $\tau>0$ on the copula term, defining
\[
p(y\mid x)\ \propto\ \pi_y\,\big[c_y(U)\big]^{\tau}\,\prod_{j=1}^d f_{j\mid y}(x_j),
\qquad U_j=F_{j\mid y}(x_j).
\]
Unless stated otherwise, we set $\tau=1$. For post-hoc calibration, we fit a Platt sigmoid on a held-out calibration split and apply it to test scores; as a monotone transform it does not change AUC metrics.

\paragraph{Data splits (used in experiments).}
When needed, $\mathcal{D}_{\mathrm{train}}$ and $\mathcal{D}_{\mathrm{test}}$ denote the train/test splits; cross-validation folds are $\{\mathcal{D}^{(m)}_{\mathrm{val}}\}_{m=1}^M$.

In the next section, we introduce the Deep Copula Classifier.

\section{Theoretical Framework}
\label{sec:theoretical}

\begin{definition}[Copula]
\label{def:copula}
A $d$-dimensional copula is a function $C:[0,1]^d \to [0,1]$ satisfying:
\begin{enumerate}
    \item \emph{Groundedness:} If any $u_i = 0$, then
    \[
      C(u_1,\dots,u_d) \;=\; 0.
    \]
    \item \emph{Uniform margins:} For each $i = 1,\dots,d$,
    \[
      C(1,\dots,1,\,u_i,\,1,\dots,1) \;=\; u_i.
    \]
    \item \emph{\(d\)-increasing:} For every pair of points
    \(\mathbf{a} = (a_1,\dots,a_d)\) and \(\mathbf{b} = (b_1,\dots,b_d)\) with
    \(0 \le a_j \le b_j \le 1\), the “volume” of the hyper-rectangle
    \([\mathbf{a},\mathbf{b}]\) must be nonnegative. Equivalently,
    \[
      \sum_{\boldsymbol{\epsilon}\in\{0,1\}^d}
      (-1)^{\,d - \lvert\boldsymbol{\epsilon}\rvert}\,
      C\bigl(w_1(\epsilon_1),\,\dots,\,w_d(\epsilon_d)\bigr)
      \;\ge\; 0,
    \]
    where, for each \(j=1,\dots,d\),
    \[
      w_j(\epsilon_j) \;=\;
      \begin{cases}
        a_j, & \epsilon_j = 0,\\
        b_j, & \epsilon_j = 1.
      \end{cases}
    \]
    Here \(\lvert\boldsymbol{\epsilon}\rvert = \sum_{j=1}^d \epsilon_j\) is the number of ones in \(\boldsymbol{\epsilon}\).
    \textit{This property ensures the copula assigns nonnegative probability to hyper-rectangles, preserving multivariate monotonicity.}
\end{enumerate}
\end{definition}

\noindent \textbf{Note:} In the case \(d=2\), this “2‐increasing” condition reduces to:
\[
  C(b_1,b_2)\;-\;C(a_1,b_2)\;-\;C(b_1,a_2)\;+\;C(a_1,a_2)\;\ge 0,
  \quad
  0 \le a_1 \le b_1 \le 1,\;\;0 \le a_2 \le b_2 \le 1,
\]
ensuring that \(C\) assigns nonnegative “mass” to each axis‐aligned rectangle in \([0,1]^2\).

\subsection{Deep Copula Classifier Definition}
Our classifier models each class's joint distribution through neural copula parameterization.

\begin{definition}[Deep Copula Classifier]
Let $\mathbf{X}=(X_1,\dots,X_d)$ be a feature vector and $Y\in\{1,\dots,K\}$ a discrete label. 
For each class $y$, the class-conditional \emph{density} factors as
\[
p_y(\mathbf{x}) \;=\; f_{\mathbf{X}\mid Y=y}(\mathbf{x})
\;=\; c_y\!\big(F_{1\mid y}(x_1),\dots,F_{d\mid y}(x_d)\big)\;\prod_{i=1}^d f_{i\mid y}(x_i),
\qquad \mathbf{x}=(x_1,\dots,x_d)\in\mathbb{R}^d.
\]
In our model, the copula density is parameterized by a positive neural network $NN_y$ and used in its normalized form
\[
\tilde c_y(u;\theta_y)\;=\;\frac{NN_y(u;\theta_y)}{\int_{[0,1]^d} NN_y(v;\theta_y)\,\dd v},
\]
yielding the estimator
\[
\widehat p_y(\mathbf{x})\;=\;\tilde c_y\!\big(\widehat{\mathbf{u}};\hat\theta_y\big)\;\prod_{i=1}^d \widehat f_{i\mid y}(x_i),
\qquad
\widehat{\mathbf{u}}=\big(\widehat F_{1\mid y}(x_1),\dots,\widehat F_{d\mid y}(x_d)\big).
\]
\end{definition}

\noindent\textbf{Notes.}
\begin{itemize}
    \item $F_{i\mid y}$ and $f_{i\mid y}$ are the class-conditional marginal CDF and PDF (arguments are numbers $x_i$).
    \item $c_y$ is the true copula density; in practice we use the normalized neural copula $\tilde c_y(\cdot;\theta_y)$.
    \item $\pi_y=P(Y=y)$ is the class prior, estimated by $\hat\pi_y=n_y/n$.
\end{itemize}

The transformation $u_i = F_{i|y}(X_i)$ maps features to the unit hypercube, allowing $NN_y$ to learn dependence patterns. If Assumption~(6) on the Monte Carlo normalizer holds, the normalization error remains negligible at the target statistical rate. Our approximation guarantees rely on recent ReLU results for Sobolev-smooth functions and Sobolev embedding (since $r > d/2$), which improve $L^2$ to $L^\infty$ up to logarithmic factors \citep{schmidt2020nonparametric, farrell2021deep, hornik1991approximation}.

\subsection{Copula Validity: Uniform Marginals}\label{sec:copula-validity}
A copula density $c$ must (i) be nonnegative, (ii) integrate to $1$ on $[0,1]^d$, and (iii) have \emph{uniform univariate marginals}, i.e.,
\[
\int_{[0,1]^{d-1}} c(u)\, \mathrm{d}u_{-i} \;=\; 1 \quad \text{for all } u_i \in [0,1],\; i=1,\dots,d.
\]
Our normalized positive network $\tilde c_y(u;\theta)=NN_y(u;\theta)/\widehat Z_y(\theta)$ enforces (i)--(ii). To align (iii), we adopt a simple penalty that asymptotically enforces uniformity:
\begin{equation}\label{eq:copula-penalty}
\mathcal{P}(\theta) \;=\; \lambda_n \sum_{i=1}^d \int_0^1 \left( \int_{[0,1]^{d-1}} \tilde c_y(u;\theta)\, \mathrm{d}u_{-i} \;-\; 1 \right)^{\!2} \mathrm{d}u_i,
\end{equation}
with $\lambda_n \to \infty$ slowly (e.g., $\lambda_n = \log n$ in theory; a fixed $\lambda$ in practice). The training objective in Alg.~\ref{Algo1} Line 14 is modified to
\[
\hat\theta_y \;\leftarrow\; \arg\max_{\theta}\; 
\frac{1}{n_y}\sum_{i:\,y^{(i)}=y}\log \tilde c_y(u^{(i)};\theta) \;-\; \mathcal{P}(\theta).
\]
This preserves the approximation/estimation rates while ensuring asymptotic copula validity. (Alternative: use a copula–flow parameterization that guarantees uniform marginals by construction; our analysis covers both.)

\subsection{Consistency and Convergence}

We establish fundamental guarantees under the following conditions:

\subsection{Regularity Conditions}\label{ass:regularity}
Fix a class label $y\in\{1,\dots,K\}$. We assume the following hold for every $y$:

\begin{enumerate}
  \item[\textbf{(1)}] \textbf{Existence of a copula‐based factorization.} 
  The conditional distribution of $X = (X_{1},\dots,X_{d})$ given $\{Y=y\}$ 
  admits a continuous joint density
  \[
    f_{X\mid Y=y}(x_{1},\dots,x_{d})
  \]
  with strictly increasing marginal CDFs
  \[
    F_{i\mid y}(x_i) \;=\; \Pbb(X_{i}\le x_{i}\mid Y=y),
    \quad
    i = 1,\dots,d.
  \]
  By Sklar’s theorem (Definition \ref{def:copula} and Sklar \citep{sklar1959fonctions}), 
  there is a unique copula function $C_{y} : [0,1]^{d} \to [0,1]$ whose density 
  \[
    c_{y}(u_{1},\dots,u_{d})
    \;=\;
    \frac{\partial^{d}}{\partial u_{1}\,\cdots\,\partial u_{d}}\,
    C_{y}(u_{1},\dots,u_{d})
  \]
  satisfies, for all $(x_{1},\dots,x_{d}) \in \mathbb{R}^{d}$,
  \[
    f_{X\mid Y=y}(x_{1},\dots,x_{d})
    \;=\;
    c_{y}\bigl(F_{1\mid y}(x_{1}),\,\dots,\,F_{d\mid y}(x_{d})\bigr)
    \;\times\;
    \prod_{i=1}^{d} f_{i\mid y}(x_{i}),
  \]
  where $f_{i\mid y}(x_{i}) = \frac{d}{dx_{i}}\,F_{i\mid y}(x_{i})$ is the $i$th marginal density.  
  In other words, the class‐conditional law factorizes exactly as “copula density $\times$ marginals.”

\item[\textbf{(2)}] \textbf{Marginal regularity (tails).} For each class $y$ and feature $i$, the marginal $X_i \mid Y=y$ has a continuous, strictly increasing $F_{i\mid y}$ with density $f_{i\mid y}$ that is bounded on compact sets and has sub-Gaussian (or at least sub-exponential) tails. Standard winsorization/monotone smoothing yields PIT variables that are well-defined and locally bi-Lipschitz on effective compact subsets; rates below are unaffected.

 \item[\textbf{(3)}] \textbf{Neural–copula approximation capacity.}
Assume the true copula density $c_y$ lies in the Sobolev class $W_2^r([0,1]^d)$ with $r>d/2$ and is bounded and Lipschitz on $[0,1]^d$. Let $\{NN_y(\cdot;\theta)\}$ denote ReLU networks whose depth and width may increase with $n$. Known approximation results for ReLU networks (e.g., Sobolev–smooth function approximation) imply that for any $\varepsilon>0$ there exists a network of size $S(\varepsilon)$ (depth $O(\log(1/\varepsilon))$ and width $O(\varepsilon^{-d/r})$ up to logarithmic factors) such that
\[
  \sup_{u\in[0,1]^d} \bigl|NN_y(u;\theta^*) - c_y(u)\bigr| \le \varepsilon .
\]
Choosing $S_n \asymp n^{d/(2r+d)}$ (up to logs) yields the sieve approximation error
\[
  \inf_{\theta:\, \text{size}\le S_n}\ 
  \sup_{u\in[0,1]^d} \bigl|NN_y(u;\theta) - c_y(u)\bigr|
  = O\!\bigl(n^{-r/(2r+d)}\bigr).
\]
Moreover, with this scaling and standard complexity controls (e.g., Rademacher/covering bounds), the estimation error matches the same order up to logarithmic factors. In practice we enforce density constraints by a positive output with normalization over $[0,1]^d$ (or a normalizing–flow parameterization); the resulting normalization error is negligible at the displayed rate.

  \item[\textbf{(4)}] \textbf{i.i.d. sampling.} 
  The training pairs $\{(X^{(i)},Y^{(i)})\}_{i=1}^{n}$ are independent and identically 
  distributed samples from the true joint law $P_{X,Y}$.  Thus we can invoke standard 
  concentration and empirical‐process arguments \citep{vandervaart2012asymptotic}.

  \item[\textbf{(5a)}] \textbf{CDF estimation.} For each $(i,y)$, the CDF estimator satisfies the DKW bound \citep{dvoretzky1956asymptotic,massart1990tight}
\[
\sup_{x\in\mathbb{R}} \bigl|\widehat F_{i\mid y}(x) - F_{i\mid y}(x)\bigr| \;=\; O_p(n_y^{-1/2}).
\]
\item[\textbf{(5b)}] \textbf{PDF estimation.} For each $(i,y)$, a kernel density estimator with standard smoothness and bandwidth choice satisfies
\[
\sup_{x\in\mathbb{R}} \bigl|\widehat f_{i\mid y}(x) - f_{i\mid y}(x)\bigr| \;=\; O_p(n_y^{-2/5}) \; \citep{silverman1986density,wasserman2006all}.
\]

Additionally, for each $(i,y)$, assume $f_{i\mid y}$ is twice continuously differentiable and bounded on its support; with a second-order kernel and bandwidth $h\asymp n_y^{-1/5}$ this yields the uniform rate $\sup_x|\widehat f_{i\mid y}(x)-f_{i\mid y}(x)|=O_p(n_y^{-2/5})$.

\noindent(In both cases $n_y$ denotes the number of samples in class $y$.)

\item[\textbf{(6)}] \textbf{Normalizer Monte Carlo accuracy.} The number of Monte Carlo samples $M$ used for $\widehat Z_y(\theta)$ obeys $M\to\infty$ and $M^{-1/2} = o\!\big(n^{-r/(2r+d)}\big)$. For instance, $M \gg n^{2r/(2r+d)}$ suffices. (Variance-reduction such as QMC/control variates can reduce constants; our rate condition uses the conservative $M^{-1/2}$ behavior.)

\item[\textbf{(7)}] \textbf{Priors and class balance.} Class priors satisfy $\min_{y}\pi_y \ge \pi_{\min} > 0$ and are consistently estimated by $\hat\pi_y = n_y/n$.

\item[\textbf{(8)}]\label{ass:regularity-mixlower} \textbf{Local mixture-density lower bound near the decision boundary.}
There exist a measurable set $S_n\subset\mathbb{R}^d$ and a constant $c_0>0$ such that $P_X(S_n)\to 1$ and
$\inf_{x\in S_n} p_X(x) \ge c_0$, where $p_X=\sum_{y=1}^K \pi_y p_y$ is the mixture density.
(This mild condition can be ensured by truncating to an “effective compact” due to the sub-Gaussian tails in Assumption~\ref{ass:regularity}(2).)

\end{enumerate}

\noindent \textbf{In the above and throughout, $n$ denotes the total number of i.i.d. training samples and $r$ is the Sobolev order of the true copula density $c_{y}\in W_{2}^{r}([0,1]^{d})$.}

\begin{theorem}[Consistency]
\label{thm:consistency}
Let 
\[
  \widehat Y_{n} \;=\;\widehat Y_{n}(\mathbf{X})
\]
be the classifier produced by the Deep Copula Classifier (DCC) when all “hat” quantities (such as \(\widehat F_{i\mid y}\), \(\widehat f_{i\mid y}\), \(\tilde c_y\), etc.) are estimated from \(n\) i.i.d.\ training samples. In particular, \(\widehat Y_{n}\) depends on \(n\) because it uses the estimated class–conditional densities
\[
  \widehat p_{y}^{(n)}(\mathbf{x})
  \;=\;
  \tilde c_{y}^{(n)}\bigl(\widehat F_{1\mid y}^{(n)}(x_{1}),\,\dots,\,\widehat F_{d\mid y}^{(n)}(x_{d})\bigr)
  \;\times\;
  \prod_{i=1}^{d}\widehat f_{i\mid y}^{(n)}(x_{i}),
\]

as outputs of Algorithm \ref{Algo1} trained on those \(n\) samples. Define the Bayes-optimal decision rule
\[
  Y^{*}(\mathbf{x})
  \;=\;
  \arg\max_{y\in\{1,\dots,K\}} \pi_y p_{y}(\mathbf{x}),
\]
where $\pi_y = P(Y=y)$ is the true class prior. Under the regularity conditions in Section~\ref{ass:regularity} (with network capacity growing appropriately as \(n\to\infty\)), we have
\[
  \lim_{n\to\infty}\;\Pbb\bigl(\widehat Y_{n}(\mathbf{X}) \neq Y\bigr)
\;=\;
\Pbb\bigl(Y^{*}(\mathbf{X}) \neq Y\bigr).
\]

In other words, the sequence of data‐driven classifiers \(\{\widehat Y_{n}\}_{n\ge1}\) is Bayes‐consistent.
\end{theorem}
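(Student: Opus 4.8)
The plan is to reduce the statement to a plug-in bound on posterior probabilities and then to uniform consistency of the estimated class-conditional densities. Write the true and DCC posteriors as $\eta_y(\mathbf x)=\pi_y p_y(\mathbf x)/\sum_{y'}\pi_{y'}p_{y'}(\mathbf x)$ and $\widehat\eta_y(\mathbf x)=\widehat\pi_y\widehat p_y(\mathbf x)/\sum_{y'}\widehat\pi_{y'}\widehat p_{y'}(\mathbf x)$. Since $\widehat Y_n=\argmax_y\widehat\eta_y$ and $Y^*=\argmax_y\eta_y$, the classical plug-in inequality (conditioning on the training sample, then taking expectations over it) yields
\[
0\ \le\ R(\widehat Y_n)-R(Y^*)\ \le\ 2\,\E\Big[\max_{y\in[K]}\bigl|\widehat\eta_y(\mathbf X)-\eta_y(\mathbf X)\bigr|\Big],
\]
the expectation being over both the training sample and an independent test point $\mathbf X$. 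It therefore suffices to prove that $\E_X\bigl[\max_y|\widehat\eta_y(\mathbf X)-\eta_y(\mathbf X)|\bigr]\to0$ in probability over the training sample; since this inner expectation lies in $[0,1]$, bounded convergence drives the full expectation to $0$, which is exactly $\lim_n\Pbb(\widehat Y_n(\mathbf X)\neq Y)=\Pbb(Y^*(\mathbf X)\neq Y)$. First I would localize to the effective compact set $S_n$ of Assumption~(8): since $P_X(S_n)\to1$ and $\max_y|\widehat\eta_y-\eta_y|\le1$, the inner expectation is at most its contribution on $S_n$ plus $2P_X(S_n^c)\to0$. On $S_n$ the mixture density is $\ge c_0>0$ and $\widehat\pi_y\to\pi_y$ a.s.\ by the strong law, so the whole problem reduces to proving that for each $y$, $\sup_{\mathbf x\in S_n}\bigl|\widehat p_y(\mathbf x)-p_y(\mathbf x)\bigr|\to0$ in probability, with $\widehat p_y$ moreover uniformly bounded on $S_n$ with probability tending to one.

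\paragraph{Decomposition of the density error.}
Using Sklar's factorization, $p_y(\mathbf x)=c_y(\mathbf u)\prod_{i} f_{i\mid y}(x_i)$ with $\mathbf u=(F_{1\mid y}(x_1),\dots,F_{d\mid y}(x_d))$, while $\widehat p_y(\mathbf x)=\tilde c_y(\widehat{\mathbf u};\hat\theta_y)\prod_{i}\widehat f_{i\mid y}(x_i)$ with $\widehat{\mathbf u}=(\widehat F_{1\mid y}(x_1),\dots,\widehat F_{d\mid y}(x_d))$. A product/telescoping split gives
\[
\widehat p_y-p_y=\bigl(\tilde c_y(\widehat{\mathbf u};\hat\theta_y)-c_y(\mathbf u)\bigr)\prod_{i}\widehat f_{i\mid y}(x_i)\;+\;c_y(\mathbf u)\Bigl(\prod_{i}\widehat f_{i\mid y}(x_i)-\prod_{i} f_{i\mid y}(x_i)\Bigr).
\]
The marginal factor is controlled by Assumptions~(2) and~(5b): on $S_n$ each $f_{i\mid y}$ is bounded above and (being continuous with a strictly increasing CDF) bounded below, $\sup_x|\widehat f_{i\mid y}(x)-f_{i\mid y}(x)|=O_p(n_y^{-2/5})\to0$, and telescoping the $d$-fold product shows $\sup_{\mathbf x\in S_n}|\prod_i\widehat f_{i\mid y}-\prod_i f_{i\mid y}|\to0$ in probability while $\prod_i\widehat f_{i\mid y}$ stays uniformly bounded on $S_n$ w.h.p. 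The copula factor I would split once more,
\[
\tilde c_y(\widehat{\mathbf u};\hat\theta_y)-c_y(\mathbf u)=\underbrace{\bigl(\tilde c_y(\widehat{\mathbf u};\hat\theta_y)-\tilde c_y(\mathbf u;\hat\theta_y)\bigr)}_{\text{PIT (argument) error}}\;+\;\underbrace{\bigl(\tilde c_y(\mathbf u;\hat\theta_y)-c_y(\mathbf u)\bigr)}_{\text{estimation error}},
\]
bounding the first bracket by the DKW rate $\max_i\sup_x|\widehat F_{i\mid y}(x)-F_{i\mid y}(x)|=O_p(n_y^{-1/2})$ of Assumption~(5a) times a Lipschitz constant of the fitted network; to keep that constant under control I would restrict the sieve $\Theta_n$ to ReLU networks with an a priori (slowly growing) Lipschitz bound, which is compatible with Assumption~(3) since the target $c_y$ is Lipschitz.

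\paragraph{The crux: consistency of the normalized neural-copula MLE.}
The remaining term $\tilde c_y(\mathbf u;\hat\theta_y)-c_y(\mathbf u)$ is the heart of the proof and the step I expect to be the main obstacle; I would split it into three pieces. (i) \emph{Approximation.} By Assumption~(3) there is $\theta^*\in\Theta_n$ with $\sup_u|NN_y(u;\theta^*)-c_y(u)|\to0$; since $\int_{[0,1]^d}c_y(v)\,\dd v=1$, normalization is continuous at the limit and $\sup_u|\tilde c_y(u;\theta^*)-c_y(u)|\to0$ as well. (ii) \emph{Monte Carlo normalizer.} With $Z_y(\theta)=\int_{[0,1]^d}NN_y(v;\theta)\,\dd v$, Assumption~(6) gives $M^{-1/2}=o(1)$ and, via a uniform law of large numbers over the covering-number--controlled class $\{NN_y(\cdot;\theta):\theta\in\Theta_n\}$, $\sup_{\theta\in\Theta_n}|\widehat Z_y(\theta)-Z_y(\theta)|\to0$ in probability, so replacing the exact normalizer by $\widehat Z_y$ perturbs $\tilde c_y$ uniformly by $o_p(1)$. (iii) \emph{Stochastic error.} Here $\hat\theta_y$ maximizes the penalized empirical log-likelihood formed from the \emph{estimated} PIT values $\widehat{\mathbf u}^{(i)}$; since $\log\tilde c_y(\cdot;\theta)$ is (uniformly over $\Theta_n$) bounded and Lipschitz on the effective region and $\max_i\|\widehat{\mathbf u}^{(i)}-\mathbf u^{(i)}\|=O_p(n_y^{-1/2})$ by Assumption~(5a), the empirical criterion differs from the one built on true PITs by $o_p(1)$ uniformly over $\Theta_n$, so the generated-regressor issue is asymptotically negligible; a standard Wald/sieve-MLE argument (a Kullback--Leibler or Hellinger contrast, with the metric entropy of the ReLU sieve controlled as in Assumption~(3), and the penalty satisfying $\mathcal P(\hat\theta_y)\to0$ because $\lambda_n\to\infty$ slowly) then yields $\|\tilde c_y(\cdot;\hat\theta_y)-c_y\|_2\to0$ in probability, and the Sobolev embedding $W_2^r([0,1]^d)\hookrightarrow L^\infty$ for $r>d/2$ (together with the $L^\infty$ approximation control already used) upgrades this to $\sup_u|\tilde c_y(u;\hat\theta_y)-c_y(u)|\to0$ in probability. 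The penalty is needed only to guarantee asymptotic copula validity (uniform marginals) and does not otherwise enter the risk bound.

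\paragraph{Assembling.}
Combining the three displays: for each $y$, $\sup_{\mathbf x\in S_n}|\widehat p_y(\mathbf x)-p_y(\mathbf x)|\to0$ in probability with $\widehat p_y$ uniformly bounded on $S_n$ w.h.p.; together with $\widehat\pi_y\to\pi_y$ and the denominator bound $\sum_{y'}\widehat\pi_{y'}\widehat p_{y'}\ge c_0/2$ on $S_n$ (w.h.p.), this gives $\sup_{\mathbf x\in S_n}\max_y|\widehat\eta_y(\mathbf x)-\eta_y(\mathbf x)|\to0$ in probability. Substituting into the localized plug-in bound from the first paragraph gives $\E_X[\max_y|\widehat\eta_y(\mathbf X)-\eta_y(\mathbf X)|]\to0$ in probability, hence $R(\widehat Y_n)-R(Y^*)\to0$ in probability, and bounded convergence (the excess risk lying in $[0,1]$) yields $R(\widehat Y_n)\to R(Y^*)$, i.e.\ $\lim_n\Pbb(\widehat Y_n(\mathbf X)\neq Y)=\Pbb(Y^*(\mathbf X)\neq Y)$, as claimed. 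The only genuinely delicate point is (iii): sieve-MLE consistency for the growing-complexity ReLU copula class with estimated PITs and a Monte Carlo normalizer, and the passage from $L^2$/Hellinger consistency to the uniform control needed on $S_n$.
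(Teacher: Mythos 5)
Your proposal is correct, and the core of it --- uniform convergence of $\widehat p_y$ to $p_y$ via the decomposition into a copula error term and a marginal-product error term, with the copula error further split into a PIT (argument) error and an estimation error handled by approximation + normalizer control + sieve-MLE arguments --- is essentially the paper's Steps 1--3. Two points differ, both in your favor or at worst neutral. First, for the final passage from density convergence to risk convergence, the paper's consistency proof only asserts ``a standard argmax-continuity argument,'' whereas you make this step explicit via the classical multiclass plug-in bound $R(\widehat Y_n)-R(Y^*)\le 2\,\E[\max_y|\widehat\eta_y-\eta_y|]$, localizing to the effective support $S_n$ of Assumption~(8) to lower-bound the posterior denominators. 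The paper instead proves (in its Appendix~A.1, used for the rate theorem) a generative regret lemma bounding the excess risk by $\sum_y\|\hat\pi_y\widehat p_y-\pi_y p_y\|_{L^1}$, which works with the unnormalized joints and so never divides by the mixture density --- it does not need Assumption~(8) at all; your posterior route does, but that assumption is available, so nothing is lost. Second, in the PIT-error bracket you bound $|\tilde c_y(\widehat{\mathbf u};\hat\theta_y)-\tilde c_y(\mathbf u;\hat\theta_y)|$ using a Lipschitz constant of the \emph{fitted} network (which you correctly note must be enforced by the sieve, e.g., spectral normalization), whereas the paper splits the other way, $|\tilde c_y(\widehat{\mathbf u})-c_y(\widehat{\mathbf u})|+|c_y(\widehat{\mathbf u})-c_y(\mathbf u)|$, so that only the Lipschitz constant of the \emph{true} copula (guaranteed by Assumption~(3)) is needed; the paper's split is marginally cleaner but both are valid under the stated architecture. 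The one place where both you and the paper are equally informal is the $L^2\!\to\!L^\infty$ upgrade for the difference between a ReLU network and $c_y$ (the difference is not itself in $W_2^r$), but since you flag the sieve-MLE step as the delicate point and invoke the same machinery the paper does, this is not a gap relative to the paper's own standard of rigor.
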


\noindent
\textit{For detailed proof,} see Appendix~\ref{app:consis}.  

\begin{theorem}[Convergence Rate]
\label{thm:convergence_rate}
Let 
\[
  \widehat Y_{n} \;=\;\widehat Y_{n}(\mathbf{X})
\]
be the classifier produced by DCC when all “hat” quantities are estimated from \(n\) i.i.d.\ training samples (so in particular \(\widehat p_{y}^{(n)}(\mathbf{x})\) depends on \(n\)).  Define the excess risk
\[
  E(n)
=\Pbb\bigl(\widehat{Y}_n(\mathbf{X}) \neq Y\bigr)
-\Pbb\bigl(Y^{*}(\mathbf{X}) \neq Y\bigr),
\qquad
Y^{*}(\mathbf{x})=\arg\max_{y}\pi_y p_y(\mathbf{x}).
\]
Suppose that:
\begin{enumerate}
  \item For each class $y$, the true copula density $c_{y}\in W_{2}^{r}([0,1]^{d})$ with $r>d/2$.
  \item Assumptions \ref{ass:regularity}\,(2), (5a), (5b), (6), (7) hold. In particular,
  \[
    \sup_{x\in\mathbb{R}}\bigl|\widehat F_{i\mid y}(x)-F_{i\mid y}(x)\bigr| = O_p(n_y^{-1/2}),
    \qquad
    \sup_{x\in\mathbb{R}}\bigl|\widehat f_{i\mid y}(x)-f_{i\mid y}(x)\bigr| = O_p(n_y^{-2/5}),
  \]
  so that $\min\{1/2,\,2/5\} > r/(2r+d)$ (typically satisfied unless $d$ is very large or $r$ very small).
\end{enumerate}
Then, as \(n \to \infty\), the excess risk satisfies
\[
  E(n)
  \;=\;
  \Pbb\bigl(\widehat Y_{n}(\mathbf{X})\neq Y\bigr)
  \;-\;
  \Pbb\bigl(Y^{*}(\mathbf{X})\neq Y\bigr)
  \;=\;
  O_{p}\!\Bigl(n^{-\,\tfrac{r}{2r + d}} + n^{-\alpha}\Bigr)
  \;=\;
  O_{p}\!\Bigl(n^{-\,\tfrac{r}{2r + d}}\Bigr).
\]

neglecting any logarithmic factors.  In other words, the excess risk of the DCC classifier decays at the rate \(n^{-\,r/(2r+d)}\).
\end{theorem}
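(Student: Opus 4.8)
The plan is to bound the excess classification risk by a sup-norm control of the estimation error of the unnormalized class scores $g_y(\mathbf{x}) := \pi_y\,p_y(\mathbf{x})$, and then to split that error into prior, marginal-density, PIT, and neural-copula pieces, each already controlled by a condition in Section~\ref{ass:regularity}. First I would use the standard plug-in decomposition: writing $\eta_y(\mathbf{x}) = \pi_y p_y(\mathbf{x})/p_X(\mathbf{x})$ for the class posterior, one has $E(n) = \E_X\big[(\eta_{Y^*(\mathbf{X})}(\mathbf{X}) - \eta_{\widehat Y_n(\mathbf{X})}(\mathbf{X}))\,\ind{\widehat Y_n(\mathbf{X})\neq Y^*(\mathbf{X})}\big]$. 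On the event $\{\widehat Y_n(\mathbf{x})\neq Y^*(\mathbf{x})\}$, comparing the argmaxes defining $\widehat Y_n$ (of $\hat g_y := \hat\pi_y\widehat p_y$) and $Y^*$ (of $g_y$) gives $g_{Y^*(\mathbf{x})}(\mathbf{x}) - g_{\widehat Y_n(\mathbf{x})}(\mathbf{x}) \le 2\max_y|\hat g_y(\mathbf{x}) - g_y(\mathbf{x})|$, hence $\eta_{Y^*(\mathbf{x})}(\mathbf{x}) - \eta_{\widehat Y_n(\mathbf{x})}(\mathbf{x}) \le 2\max_y|\hat g_y(\mathbf{x}) - g_y(\mathbf{x})|/p_X(\mathbf{x})$. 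Using Assumption~(8) to get $p_X \ge c_0$ on $S_n$ with $P_X(S_n^c)\to 0$,
\[
E(n) \;\le\; \frac{2}{c_0}\,\E_X\!\big[\,\max_y|\hat g_y(\mathbf{X}) - g_y(\mathbf{X})|\,\ind{\mathbf{X}\in S_n}\,\big] \;+\; 2\,P_X(S_n^c).
\]
Taking $S_n$ a ball of radius $\asymp \sqrt{\log n}$, the sub-Gaussian tails of Assumption~(2) make $P_X(S_n^c) = o(n^{-k})$ for every $k$, so this term is negligible, and it remains to show $\max_y\sup_{\mathbf{x}\in S_n}|\hat g_y(\mathbf{x}) - g_y(\mathbf{x})| = O_p(n^{-r/(2r+d)})$ up to logs.

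Next I would telescope, for each $y$ and uniformly over $\mathbf{x}\in S_n$. Write $\hat g_y - g_y = (\hat\pi_y - \pi_y)\widehat p_y + \pi_y(\widehat p_y - p_y)$; since $|\hat\pi_y - \pi_y| = O_p(n^{-1/2})$ and $\sup_{S_n}\widehat p_y = O_p(1)$, the prior term is $O_p(n^{-1/2})$. For $\widehat p_y - p_y$, with $\widehat p_y(\mathbf{x}) = \tilde c_y(\widehat{\mathbf{u}};\hat\theta_y)\prod_i\widehat f_{i\mid y}(x_i)$, $p_y(\mathbf{x}) = c_y(\mathbf{u})\prod_i f_{i\mid y}(x_i)$, $\mathbf{u} = (F_{i\mid y}(x_i))_i$, $\widehat{\mathbf{u}} = (\widehat F_{i\mid y}(x_i))_i$, I split into three parts: (i) the marginal-product error $c_y(\mathbf{u})\big(\prod_i\widehat f_{i\mid y} - \prod_i f_{i\mid y}\big)$, which by boundedness of $c_y$ and a telescoping product bound is $O\big(\sum_i\|\widehat f_{i\mid y} - f_{i\mid y}\|_{\infty,S_n}\big) = O_p(n_y^{-2/5})$ by Assumption~(5b) with $d$ fixed; (ii) the PIT error $\big(\tilde c_y(\widehat{\mathbf{u}};\hat\theta_y) - \tilde c_y(\mathbf{u};\hat\theta_y)\big)\prod_i\widehat f_{i\mid y}$, which I bound \emph{without} invoking the network's Lipschitz constant by inserting $c_y$ twice: $|\tilde c_y(\widehat{\mathbf{u}};\hat\theta_y) - \tilde c_y(\mathbf{u};\hat\theta_y)| \le 2\|\tilde c_y(\cdot;\hat\theta_y) - c_y\|_\infty + \mathrm{Lip}(c_y)\,\|\widehat{\mathbf{u}} - \mathbf{u}\|_\infty$ with $\|\widehat{\mathbf{u}} - \mathbf{u}\|_\infty = \max_i\|\widehat F_{i\mid y} - F_{i\mid y}\|_\infty = O_p(n_y^{-1/2})$ by the DKW bound (5a); and (iii) the copula approximation/estimation error $\|\tilde c_y(\cdot;\hat\theta_y) - c_y\|_\infty$, which Assumption~(3) supplies at order $n^{-r/(2r+d)}$ up to logs, the Monte-Carlo normalizer error $O_p(M^{-1/2})$ being $o(n^{-r/(2r+d)})$ by Assumption~(6) and absorbed. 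Since $n_y\asymp n$ by class balance (Assumption~(7)) and $\min\{1/2,2/5\} > r/(2r+d)$ by hypothesis, (i)--(iii) are each $O_p(n^{-r/(2r+d)})$ up to logs, and combining with the reduction above yields $E(n) = O_p(n^{-r/(2r+d)} + n^{-\alpha}) = O_p(n^{-r/(2r+d)})$ with $\alpha = 2/5$.

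The main obstacle is item (iii): proving that the penalized sieve maximum-likelihood estimator $\hat\theta_y = \argmax_\theta\big\{n_y^{-1}\sum_{i:\,y^{(i)}=y}\log\tilde c_y(u^{(i)};\theta) - \mathcal{P}(\theta)\big\}$ over ReLU networks of size $\le S_n \asymp n^{d/(2r+d)}$ attains the rate $n^{-r/(2r+d)}$ (up to logs) in sup norm. This needs (a) the ReLU approximation theorem for $W_2^r([0,1]^d)$ to bound the bias (Assumption~(3)); (b) an empirical-process step — covering-number / pseudo-dimension bounds for ReLU classes of size $S_n$ plus a localization (peeling) argument — showing the stochastic part of the log-likelihood contrast is $O_p(\sqrt{S_n\log n/n}) = O_p(n^{-r/(2r+d)}\sqrt{\log n})$ in Hellinger/$L^2$, which itself requires the densities bounded away from $0$ and $\infty$ on the effective compact (so that $\log\tilde c_y$ is Lipschitz in the density), guaranteed by positivity, normalization, and a mild floor; and (c) upgrading the $L^2$ rate to an $L^\infty$ rate via the Sobolev embedding $r > d/2$ (up to logs), as in the cited ReLU results. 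I would also check that the uniform-marginals penalty $\mathcal{P}$ is lower order: it vanishes at the true copula and is $O(\varepsilon^2)$ at an $\varepsilon$-approximating network, so with $\lambda_n = \log n$ its contribution is absorbed into the stated logarithmic slack. The remaining ingredients — the plug-in margin comparison, the telescoping bounds, and the tail truncation — are routine under the stated assumptions.
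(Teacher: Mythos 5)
Your decomposition of $\widehat p_y - p_y$ into marginal-product, PIT, and copula approximation/estimation errors, and your treatment of the Monte-Carlo normalizer and the uniformity penalty, match the paper's argument almost term for term. Where you genuinely diverge is the first step, the reduction from excess risk to density error. The paper proves a short generative regret lemma (Appendix~\ref{app:gen-bound}): writing $a_y = \pi_y p_y$ and $\hat a_y = \hat\pi_y \widehat p_y$, one has $R(\widehat Y)-R(Y^*) \le \sum_y \|\hat a_y - a_y\|_{L^1(dx)}$ directly, with no division by $p_X$ and no density lower bound; the theorem then follows from the $L^1$ control of each $\widehat p_y - p_y$. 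You instead pass to posteriors $\eta_y = a_y/p_X$, invoke Assumption~(8) to lower-bound $p_X$ on an effective set $S_n$, and reduce to a sup-norm bound on $S_n$. That is essentially the route the paper reserves for Corollary~\ref{cor:fast-rate-mc} (where the margin condition genuinely needs pointwise control of $\widehat\eta - \eta$); for the plain rate it is both heavier than necessary and uses a hypothesis ((8)) that the theorem does not list among its assumptions.

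There is also a concrete tension in your tail handling. You want simultaneously $\inf_{x\in S_n} p_X(x) \ge c_0$ for a \emph{fixed} $c_0>0$ and $P_X(S_n^c)$ decaying faster than $n^{-r/(2r+d)}$, and you propose $S_n$ a ball of radius $\asymp\sqrt{\log n}$. For sub-Gaussian but unboundedly supported features these two requirements conflict: on such a ball the mixture density near the boundary is itself polynomially small in $n$, so $c_0$ cannot stay fixed, while Assumption~(8) as stated only guarantees $P_X(S_n^c)=o(1)$, which is too slow to be absorbed into $O_p(n^{-r/(2r+d)})$. The clean fix is to not divide by $p_X$ at all: note that $\E_X\bigl[\max_y|\hat g_y-g_y|/p_X\bigr] = \int \max_y|\hat g_y - g_y|\,dx \le \sum_y\|\hat g_y - g_y\|_{L^1(dx)}$, which is exactly the paper's lemma and removes both the lower bound and the tail term. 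With that substitution your proof is correct and delivers the stated rate; as written, the step ``$2P_X(S_n^c)$ is negligible'' is not justified by the assumptions you cite.
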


\noindent
\textit{For detailed proof,} see Appendix~\ref{app:convrate}.

\begin{proposition}[Asymptotic copula validity]\label{prop:valid}
Under Assumptions~\ref{ass:regularity}\,(2)–(3) and with $\lambda_n \to \infty$ slowly and network size $S_n \to \infty$ at the rates in Assumption~\ref{ass:regularity}\,(3), any sequence of maximizers $\hat\theta_y$ of the penalized objective satisfies
\[
\sum_{i=1}^d \int_0^1 \Big(\int \tilde c_y(u;\hat\theta_y)\,du_{-i} - 1\Big)^2 du_i \;\xrightarrow{p}\; 0.
\]
Hence $\tilde c_y(\cdot;\hat\theta_y)$ is asymptotically a valid copula density; the non-uniformity is $o_p\!\big(n^{-r/(2r+d)}\big)$ and does not affect Theorems~\ref{thm:consistency}–\ref{thm:convergence_rate}.
\end{proposition}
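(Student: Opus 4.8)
The plan is to run a basic-inequality argument for the penalized sieve M-estimator, exploiting that the true copula density $c_y$ is a fixed point of the marginalization map (its univariate marginals are identically $1$): a near-optimal sieve network makes the penalty negligible, and the diverging weight $\lambda_n$ then forces the non-uniformity of the actual maximizer to vanish faster than the statistical rate. Write $\Phi(\theta)=\sum_{i=1}^d\int_0^1\big(\int_{[0,1]^{d-1}}\tilde c_y(u;\theta)\,\dd u_{-i}-1\big)^2\dd u_i$, so the penalty is $\mathcal P(\theta)=\lambda_n\Phi(\theta)$, and let $\ell_n(\theta)=n_y^{-1}\sum_{i:\,y^{(i)}=y}\log\tilde c_y(\widehat{\mathbf u}^{(i)};\theta)$ be the empirical log-likelihood. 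First I would fix, for each $n$, the sieve parameter $\theta_n^\star$ furnished by Assumption~\ref{ass:regularity}\,(3), with $\sup_{u\in[0,1]^d}|NN_y(u;\theta_n^\star)-c_y(u)|\le\varepsilon_n$ and $\varepsilon_n=O(n^{-r/(2r+d)})$ up to logs. Since the exact normalizer $\int_{[0,1]^d}NN_y(\cdot;\theta_n^\star)\,\dd v\to\int_{[0,1]^d}c_y=1$, one gets $\sup_{u}|\tilde c_y(u;\theta_n^\star)-c_y(u)|=O(\varepsilon_n)$; and because $c_y$ has uniform marginals,
\[
\Big|\int_{[0,1]^{d-1}}\tilde c_y(u;\theta_n^\star)\,\dd u_{-i}-1\Big|=\Big|\int_{[0,1]^{d-1}}\big(\tilde c_y(u;\theta_n^\star)-c_y(u)\big)\,\dd u_{-i}\Big|=O(\varepsilon_n),
\]
hence $\Phi(\theta_n^\star)=O(d\,\varepsilon_n^2)$.

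The second step is the basic inequality. Since $\hat\theta_y$ maximizes $\ell_n-\lambda_n\Phi$ over the sieve and $\theta_n^\star$ is feasible, $\ell_n(\hat\theta_y)-\lambda_n\Phi(\hat\theta_y)\ge\ell_n(\theta_n^\star)-\lambda_n\Phi(\theta_n^\star)$, so $\Phi(\hat\theta_y)\le\Phi(\theta_n^\star)+\lambda_n^{-1}\big[\ell_n(\hat\theta_y)-\ell_n(\theta_n^\star)\big]$, and I need an \emph{upper} bound on the log-likelihood gap. Here I would introduce the population log-likelihood $L(\theta)=\E_{U\sim C_y}[\log\tilde c_y(U;\theta)]$: because each $\tilde c_y(\cdot;\theta)$ integrates to $1$ on $[0,1]^d$ (up to the $o_p(n^{-r/(2r+d)})$ Monte-Carlo error of Assumption~\ref{ass:regularity}\,(6)), Jensen gives $L(\theta)\le\int c_y\log c_y=:L_{\max}$ for \emph{every} $\theta$, while $L(\theta_n^\star)=L_{\max}-\mathrm{KL}\!\big(c_y\,\|\,\tilde c_y(\cdot;\theta_n^\star)\big)=L_{\max}-O(\varepsilon_n^2)$, using that $c_y$ is bounded above and away from $0$ on the effective compact of Assumption~\ref{ass:regularity}\,(8). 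Decomposing,
\[
\ell_n(\hat\theta_y)-\ell_n(\theta_n^\star)=\big\{[\ell_n-L](\hat\theta_y)-[\ell_n-L](\theta_n^\star)\big\}+\big\{L(\hat\theta_y)-L(\theta_n^\star)\big\},
\]
the second brace is $\le L_{\max}-L(\theta_n^\star)=O(\varepsilon_n^2)$, and the first is an empirical-process increment over the neural sieve which the Rademacher/covering bounds of Assumption~\ref{ass:regularity}\,(3), together with the peeling argument establishing the $L^2$ estimation rate $\|\tilde c_y(\cdot;\hat\theta_y)-c_y\|_{L^2}=O_p(\varepsilon_n)$, control at order $O_p(\varepsilon_n^2)$ up to logs; replacing $F_{i\mid y}$ by $\widehat F_{i\mid y}$ inside $\widehat{\mathbf u}^{(i)}$ adds only $O_p(n_y^{-1/2})=o_p(n^{-r/(2r+d)})$ via Assumption~\ref{ass:regularity}\,(5a) and Lipschitzness of $\log\tilde c_y$. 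Collecting, $\Phi(\hat\theta_y)\le O(d\,\varepsilon_n^2)+\lambda_n^{-1}O_p(n^{-2r/(2r+d)})=O_p\!\big(\lambda_n^{-1}n^{-2r/(2r+d)}\big)$ up to logs, which tends to $0$ in probability since $\lambda_n\to\infty$ — the displayed conclusion.

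For the rate rider and the claim that this does not affect Theorems~\ref{thm:consistency}--\ref{thm:convergence_rate}, I would use that marginalization is an $L^2$-contraction, $\Phi(\theta)\le d\,\|\tilde c_y(\cdot;\theta)-c_y\|_{L^2([0,1]^d)}^2$, which at $\hat\theta_y$ is already $O_p(n^{-2r/(2r+d)})$ from the sieve rate underlying Theorem~\ref{thm:convergence_rate} (corroborating the bound above); consequently $\big\|\int\tilde c_y(\cdot;\hat\theta_y)\,\dd u_{-i}-1\big\|_{L^1([0,1])}\le\sqrt{\Phi(\hat\theta_y)}=o_p(n^{-r/(2r+d)})$. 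Replacing $\tilde c_y(\cdot;\hat\theta_y)$ by its marginal-uniformized version $\bar c_y=\tilde c_y\big/\prod_i\!\big(\int\tilde c_y\,\dd u_{-i}\big)$ --- a genuine copula density --- then changes $\widehat p_y$, and hence the plug-in discriminant $\hat\pi_y\widehat p_y(\mathbf x)$, by an amount of order $\sqrt{\Phi(\hat\theta_y)}$ in $L^1(P_X)$ (the marginal densities being bounded on the effective compact), i.e.\ $o_p(n^{-r/(2r+d)})$, which is dominated by the $n^{-r/(2r+d)}$ term already carried through those proofs; so neither the consistency nor the rate is affected.

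The step I expect to be the main obstacle is the upper bound on the log-likelihood gap --- concretely, controlling the empirical-process increment $[\ell_n-L](\hat\theta_y)-[\ell_n-L](\theta_n^\star)$ over the growing neural sieve. This requires a \emph{uniform} two-sided bound on $\log\tilde c_y(u;\theta)$ across feasible $\theta$, the lower bound on $\tilde c_y$ being the delicate part (handled by restricting the sieve to networks bounded below by a slowly vanishing $\delta_n$, or by truncating to the effective compact of Assumptions~\ref{ass:regularity}\,(2),(8)), a covering-number estimate for ReLU networks in terms of $S_n$, and the self-referential $L^2$-rate step for $\hat\theta_y$ resolved by peeling. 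The remaining ingredients --- the Jensen inequality $L(\theta)\le L_{\max}$, the bound $\Phi(\theta_n^\star)=O(d\varepsilon_n^2)$, the marginalization contraction, and the PIT plug-in remainder --- are routine given the stated assumptions.
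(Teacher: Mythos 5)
Your proposal is correct in its essentials and rests on the same core mechanism as the paper's proof in Appendix~\ref{app:proof-prop-valid}: compare the penalized maximizer against a near-optimal sieve element $\theta_n^\star$ whose penalty is $O(\varepsilon_n^2)$ (you get this from the sup-norm approximation bound; the paper gets the same thing from the $L^1$ bound plus Fubini), invoke the basic inequality, and let $\lambda_n\to\infty$ crush the defect. Where you diverge is in how much you ask of the log-likelihood gap $\ell_n(\hat\theta_y)-\ell_n(\theta_n^\star)$. The paper only bounds this term by $O_p(1)$ (both quantities are uniformly bounded under the spectral/Lipschitz control), which already yields $\Phi(\hat\theta_y)\le \Phi(\theta_n^\star)+O_p(\lambda_n^{-1})=o_p(1)$ — the displayed conclusion — without any peeling or localization. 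Your sharper $O_p(\varepsilon_n^2)$ bound on the gap, via Jensen ($L(\theta)\le L_{\max}$), a KL expansion at $\theta_n^\star$, and a localized empirical-process increment, is exactly the step you flag as the main obstacle, and it is not needed for the proposition's main display; you could drop it and still close the argument. Two smaller points. First, your route to the rate rider — marginalization is an $L^2([0,1]^d)\to L^2([0,1])$ contraction, so $\Phi(\hat\theta_y)\le d\,\|\tilde c_y(\cdot;\hat\theta_y)-c_y\|_{L^2}^2=O_p(n^{-2r/(2r+d)})$ from the estimation rate already established for Theorem~\ref{thm:convergence_rate} — is cleaner and more explicit than the paper's Step~6, which appeals somewhat loosely to "the same linear functional bound." Second, in your collected bound the $O(d\,\varepsilon_n^2)$ term coming from $\Phi(\theta_n^\star)$ is \emph{not} multiplied by $\lambda_n^{-1}$, so the correct conclusion is $\Phi(\hat\theta_y)=O_p(n^{-2r/(2r+d)})$ rather than $O_p(\lambda_n^{-1}n^{-2r/(2r+d)})$; this gives $\sqrt{\Phi(\hat\theta_y)}=O_p(n^{-r/(2r+d)})$ rather than $o_p$, but since that is the same order as the dominant density-estimation error, the claim that the non-uniformity does not affect Theorems~\ref{thm:consistency}–\ref{thm:convergence_rate} survives intact. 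Your additional observation that one may renormalize to a genuine copula density $\bar c_y$ at a cost of order $\sqrt{\Phi(\hat\theta_y)}$ is a useful supplement not present in the paper.
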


\noindent
\textit{For detailed proof,} see Appendix~\ref{app:proof-prop-valid}. 

\begin{corollary}[Fast excess-risk rate under a multiclass margin]\label{cor:fast-rate-mc}
Let $\eta_y(x)=P(Y=y\mid X=x)$ and define the multiclass margin
$\Delta(x)=\eta_{(1)}(x)-\eta_{(2)}(x)$, where
$\eta_{(1)}\ge \eta_{(2)}\ge\cdots$ are the ordered class probabilities.
Assume the Tsybakov margin condition $P(\Delta(X)\le t)\le C t^\kappa$ for some $\kappa>0$ and the local mixture-density condition in Assumption~\ref{ass:regularity-mixlower} below. Then, up to logarithmic factors,
\[
E(n)=O_p\!\Big(n^{-\frac{(1+\kappa)r}{2r+d}}\Big).
\]
\end{corollary}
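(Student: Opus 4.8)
The plan is to run the classical plug-in fast-rate argument --- lower-bound the local margin by the pointwise density-estimation error and integrate against the Tsybakov tail bound --- adapted to the multiclass, copula-factorized estimator. Throughout write $g_y(x)=\pi_y p_y(x)$, $\hat g_y(x)=\hat\pi_y\widehat p_y(x)$, $p_X(x)=\sum_y g_y(x)$, and $\eta_y(x)=g_y(x)/p_X(x)$, so that $Y^*(x)=\argmax_y g_y(x)$, $\widehat Y_n(x)=\argmax_y \hat g_y(x)$, and the multiclass $0/1$ excess risk has the exact representation
\[
E(n)\;=\;\E_X\!\big[\big(\eta_{(1)}(X)-\eta_{\widehat Y_n(X)}(X)\big)\,\ind{\widehat Y_n(X)\neq Y^*(X)}\big].
\]
Set $\delta(x)=\max_{z\in[K]}\big|\hat g_z(x)-g_z(x)\big|$. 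The first step is the deterministic pointwise bound: if $\widehat Y_n(x)=y\neq y^*:=Y^*(x)$ then $\hat g_y(x)\ge \hat g_{y^*}(x)$, so $g_y(x)\ge g_{y^*}(x)-2\delta(x)$; dividing by $p_X(x)$ and using $\eta_y(x)\le\eta_{(2)}(x)$ gives simultaneously
\[
\eta_{(1)}(x)-\eta_{\widehat Y_n(x)}(x)\;\le\;\frac{2\delta(x)}{p_X(x)}
\qquad\text{and}\qquad
\Delta(x)\;\le\;\frac{2\delta(x)}{p_X(x)} .
\]
(Ties in the argmax lie in $\{\Delta(X)=0\}$, which has $P_X$-measure zero under the Tsybakov condition with $\kappa>0$, so they are harmless.)

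Next I would upgrade this to a uniform bound on the set $S_n$ of Assumption~\ref{ass:regularity-mixlower}, on which $p_X\ge c_0$. Reusing the ingredients behind Theorem~\ref{thm:convergence_rate} --- the uniform neural-copula error $O_p(n^{-r/(2r+d)})$ from Assumption~\ref{ass:regularity}(3), the DKW/PIT error $O_p(n^{-1/2})$ propagated through the Lipschitz copula density, the KDE error $O_p(n^{-2/5})$, and $|\hat\pi_y-\pi_y|=O_p(n^{-1/2})$, together with boundedness of $c_y$ and of the marginal densities on $S_n$ --- and using $\min\{1/2,2/5\}>r/(2r+d)$, one gets $\sup_{x\in S_n}\delta(x)=O_p(n^{-r/(2r+d)})$ up to logarithmic factors. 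Fix a deterministic envelope $\varepsilon_n:=n^{-r/(2r+d)}(\log n)^{c}$ (a suitable fixed power $c$) so that the event $\mathcal A_n=\{\sup_{x\in S_n}\delta(x)\le\varepsilon_n\}$ has probability $\to 1$.

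Then I would split $E(n)$ across $S_n$ and its complement. On $S_n^c$ the integrand is at most $1$, so that contribution is $\le P_X(S_n^c)$; taking $S_n$ to be a Euclidean ball of radius $\asymp\sqrt{\log n}$ and invoking the sub-Gaussian tails of Assumption~\ref{ass:regularity}(2) makes $P_X(S_n^c)=o\!\big(n^{-(1+\kappa)r/(2r+d)}\big)$. On $S_n\cap\mathcal A_n$ the two pointwise bounds give: the per-point loss is at most $2\varepsilon_n/c_0$, and $\{\widehat Y_n(X)\neq Y^*(X)\}\cap S_n\subseteq\{\Delta(X)\le 2\varepsilon_n/c_0\}$; hence by the Tsybakov margin condition
\[
\E_X\!\big[\big(\eta_{(1)}(X)-\eta_{\widehat Y_n(X)}(X)\big)\ind{\widehat Y_n(X)\neq Y^*(X),\,X\in S_n}\big]
\;\le\;\frac{2\varepsilon_n}{c_0}\,P_X\!\Big(\Delta(X)\le\frac{2\varepsilon_n}{c_0}\Big)
\;\le\;C\,\varepsilon_n^{\,1+\kappa}.
\]
Since $P(\mathcal A_n)\to 1$, combining the two pieces yields $E(n)=O_p(\varepsilon_n^{1+\kappa})=O_p\!\big(n^{-(1+\kappa)r/(2r+d)}\big)$ up to logarithmic factors, as claimed.

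The main obstacle is the uniform bound $\sup_{x\in S_n}\delta(x)=O_p(n^{-r/(2r+d)})$: it requires pushing the copula-domain uniform error through the \emph{estimated} PIT map $\hat u=\widehat F_{\cdot\mid y}(x)$ (whose fluctuations must be controlled uniformly over the growing set $S_n$), multiplying by the fluctuating marginal densities, and verifying that the logarithmic overheads from the sieve size and from the $\sqrt{\log n}$ radius of $S_n$ do not accumulate beyond a fixed power of $\log n$ --- essentially the same bookkeeping already carried out for Theorem~\ref{thm:convergence_rate}, which I would invoke and extend rather than redo. Everything else --- the excess-risk identity, the two pointwise inequalities, and the integration against the Tsybakov tail --- is elementary.
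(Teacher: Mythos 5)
Your proposal is correct and follows essentially the same route as the paper's proof: a uniform bound on the plug-in density/conditional-probability error over the effective support $S_n$ (where $p_X\ge c_0$), combined with the Tsybakov margin condition to localize misclassification to $\{\Delta(X)\le C\varepsilon_n\}$ and gain the extra factor $\varepsilon_n^{\kappa}$, plus a negligible contribution from $S_n^c$. The only differences are presentational: you prove the margin-transfer step explicitly from the excess-risk identity where the paper cites the Audibert--Tsybakov argument, and you handle the tail set more carefully by requiring $P_X(S_n^c)=o\!\big(n^{-(1+\kappa)r/(2r+d)}\big)$ via the sub-Gaussian tails --- a condition that is actually needed for the stated rate, since the paper's own Step~4 only records $P_X(S_n^c)=o(1)$.
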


\noindent \textit{For detailed proof,} see Appendix~\ref{app:proof-fast-rate-mc}.

In the next section, we provide the algorithms.

\section{Algorithm and Implementation}
\label{sec:algorithm}

\subsection{Training Procedure}

The DCC training process decomposes into marginal estimation and dependence learning, as formalized in Algorithm~\ref{Algo1}.

\begin{algorithm}[ht]
\caption{Theoretical Training Protocol for DCC}
\label{Algo1}
\begin{algorithmic}[1]
  \Require Labeled sample $\{(\mathbf{x}^{(i)}, y^{(i)})\}_{i=1}^n \stackrel{iid}{\sim} P_{X,Y}$
  \Ensure Estimated marginals $\{\hat{F}_{j|y}, \hat{f}_{j|y}\}$ and parameters $\{\hat{\theta}_y\}_{y=1}^K$
  \Statex
  \State \textbf{Phase 1: Consistent Marginal Estimation}
  \For{each class $y \in \{1,\dots,K\}$}
    \For{each feature $j \in \{1,\dots,d\}$}
      \State Set $\widehat F_{j|y}$ to the empirical CDF (optionally monotone-smoothed)
      \State Estimate $\widehat f_{j|y}$ by univariate KDE with bandwidth chosen by CV or a rule-of-thumb (e.g., $h\asymp n_y^{-1/5}$)
    \EndFor
  \EndFor
  \Statex
  \State \textbf{Phase 2: Neural Copula Learning}
  \For{each class $y \in \{1,\dots,K\}$}
    \State Transform: $u_j^{(i)} \leftarrow \widehat F_{j|y}\!\bigl(x_j^{(i)}\bigr)$ for all $i$ with $y^{(i)}=y$
    \State Define a positive network $NN_y(u;\theta_y)$; enforce positivity via softplus output
    \State Estimate normalizer by MC: $\widehat Z_y(\theta_y) \leftarrow \tfrac{1}{M}\sum_{m=1}^M NN_y\!\big(V^{(m)};\theta_y\big)$ with $V^{(m)}\!\sim\!\Unif([0,1]^d)$
    \State \textit{(stability)} Before taking logs, floor the normalized density: set $\tilde c_y(u;\theta_y)\leftarrow \max\!\big(\tilde c_y(u;\theta_y),\varepsilon\big)$ with $\varepsilon=10^{-12}$
    \State Maximize normalized log-likelihood:
      \[
       \hat\theta_y \;\leftarrow\; \arg\max_{\theta}\; 
\frac{1}{n_y}\sum_{i:\,y^{(i)}=y}\log \tilde c_y(u^{(i)};\theta)
\;-\; \mathcal{P}(\theta)
\quad\text{with }\mathcal{P}(\theta)\text{ from \eqref{eq:copula-penalty}.}
      \]
  \EndFor
\end{algorithmic}

\end{algorithm}

\textit{Note:} For CDFs we use the empirical CDF (optionally monotone-smoothed). For KDE bandwidth, use cross-validation or Silverman's rule \citep{silverman1986density} to balance bias--variance. \textit{Numerical stability:} when evaluating log terms, use the $\varepsilon$-flooring above to avoid $\log 0$ underflows.

\subsection{Prediction Procedure}

Classification follows the Bayes decision rule under the estimated densities (Algorithm~\ref{Algo2}).

\begin{algorithm}[H]
\caption{Theoretical Prediction Mechanism}
\label{Algo2}
\begin{algorithmic}[1]
  \Require Test point $\mathbf{x}^*$, trained DCC model
  \Ensure Predicted class $\hat{y}$
  \For{each class $y \in \{1,\dots,K\}$}
    \State $u_j^* \leftarrow \widehat F_{j|y}\!\bigl(x_j^*\bigr)$ for $j=1,\dots,d$
    \State $\tilde c_y(u^*) \leftarrow \dfrac{NN_y(u^*;\hat{\theta}_y)}{\widehat Z_y(\hat{\theta}_y)}$
    \State \textit{(stability)} \; $\tilde c_y(u^*) \leftarrow \max\!\big(\tilde c_y(u^*),\varepsilon\big)$ and $\widehat f_{j\mid y}(x_j^*) \leftarrow \max\!\big(\widehat f_{j\mid y}(x_j^*),\varepsilon\big)$ for all $j$; $\varepsilon=10^{-12}$
    \State $L_y \leftarrow \tilde c_y(u^*)\,\prod_{j=1}^d \widehat f_{j|y}(x_j^*)$
    \State $\tilde{L}_y \leftarrow L_y \cdot \hat{\pi}_y$ with $\hat{\pi}_y = n_y/n$
    \State \textit{(optional calibration)} If using post-hoc calibration, pass the decision score through the Platt sigmoid learned on the calibration split to obtain probabilities; AUC is unchanged by this monotone transform.
  \EndFor
  \State \textit{(tie-break)} If multiple $y$ achieve the maximum of $\tilde L_y$, return the smallest index
  \State \Return $\hat{y} = \arg\max_{y} \tilde{L}_y$
\end{algorithmic}

\end{algorithm}

\noindent\textit{Complexity remark.} Prediction evaluates $\widehat F_{j\mid y}$ for each $j=1,\dots,d$ and each class $y$, and one copula network per class. The overall per-point cost is $O(Kd)$ CDF evaluations plus $K$ network passes; vectorization/caching across $y$ reduces wall-time in practice.

\subsection{Neural Network Architecture}
\label{sec:nnarch}

The copula networks $NN_y$ require architectural constraints to ensure theoretical validity:

\begin{itemize}
    \item \textbf{Input/Output:} $NN_y:[0,1]^d \to (0,\infty)$; positivity via softplus output. The copula density used in training/prediction is the normalized $\tilde c_y(u;\theta)=NN_y(u;\theta)/\widehat Z_y(\theta)$ with $\widehat Z_y$ estimated by MC as above.
    \item \textbf{Depth/Width (theory scale):} Depth $L = O(\log n)$ and width $W = O\!\big(n^{d/(2r+d)}\big)$ suffice for $r$-smooth copulas (rates up to logarithmic factors).
    \item \textbf{Activation:} ReLU in hidden layers; softplus output for positivity.
    \item \textbf{Regularization:} Spectral normalization or weight decay to control complexity (consistent with Rademacher/covering bounds used in the proofs).
\end{itemize}

We apply spectral normalization or weight decay to keep the estimator in a bounded Sobolev/Lipschitz ball, which is used in the sup-norm approximation and estimation arguments.
\citep{miyato2018spectral,bartlett2017spectrally}

\textit{Note:} While theoretically achievable through Barron space approximation \citep{barron1993universal}, practical implementations enforce the integral constraint via normalization techniques like softmax over a grid or Monte Carlo integration.

In the next section, we report the results from our experiments.


\section{Experiments}\label{sec:experiments}

We present results from two testbeds: first, a controlled, synthetic setting with a known ground-truth dependence, and second, a real-world clinical benchmark. To prevent tuning bias, all methods use the same train, calibration, and test splits. Calibration is performed only on the held-out calibration split, while the test set is not used until the final evaluation.

\subsection{Experiment 1: Synthetic bivariate classes with opposite correlations}\label{sec:exp-toy}

\paragraph{Data.}
We generate two binary classes as
$\mathcal{N}(\mathbf{0}, \Sigma_{+\rho})$ vs.\ $\mathcal{N}(\mathbf{0}, \Sigma_{-\rho})$,
with $\Sigma_{\rho}=\begin{psmallmatrix}1 & \rho\\ \rho & 1\end{psmallmatrix}$ and $\rho=0.995$.
We sample $2{,}000$ points per class (total $n=4{,}000$). A fixed split is used throughout:
$70\%$ train and $30\%$ test (stratified); from the training set, a $15\%$ \emph{calibration} subset
is carved out (stratified) and used only for scalar probability calibration.

\paragraph{Model.}
DCC uses a positive MLP with spectral normalization; the copula density is normalized by a
deterministic grid expectation over $[0,1]^2$ (grid resolution $256\times256$).
Depth and width follow the theory-guided recipe in the code
($L=\lceil \log_2 n_y\rceil$, width $\propto n_y^{d/(2r+d)}$ with $r{=}2$).
Training uses Adam ($\text{lr}=10^{-3}$, $500$ epochs, batch size $256$) with a mild grid penalty
encouraging uniform marginals. \textbf{Decision rule:} we use $\tau=1$, i.e., the copula term
\emph{plus} the product of univariate marginals and class priors.\footnote{Platt (logistic)
calibration is fitted only on the held-out calibration split and then applied to test scores.}
We ablate the marginal strategy: \texttt{oracle\_normal} (true standard-normal marginals),
\texttt{pooled} KDE, and \texttt{per\_class} KDE, all with the same paper-faithful bandwidth
$h \propto n^{-0.51}$ (scale $=10$).

\paragraph{Figures.}
The data and decision regions are shown in Fig.~\ref{fig:toy-grid}; the boundaries in the three
panels use the calibrated $0.5$ threshold. Oracle and pooled marginals recover the expected
quadrants with sharp transitions; the per-class KDE shows the anticipated roughness due to fewer
samples per class.

\begin{figure}[ht]
  \centering
  \includegraphics[width=.48\linewidth]{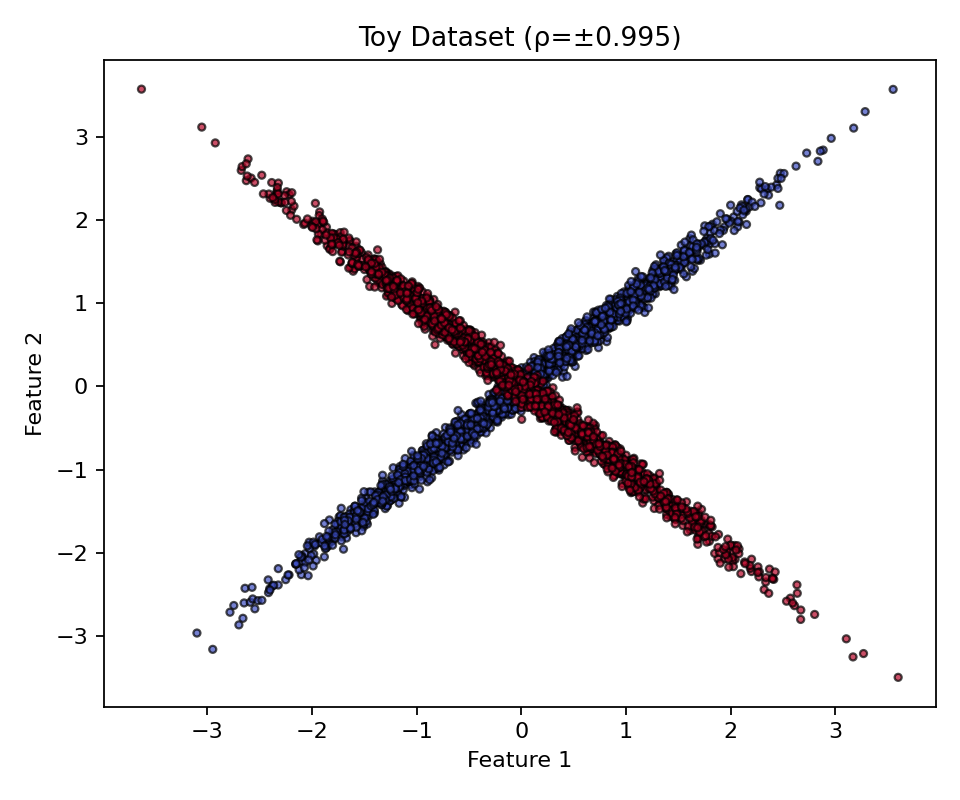}
  \includegraphics[width=.48\linewidth]{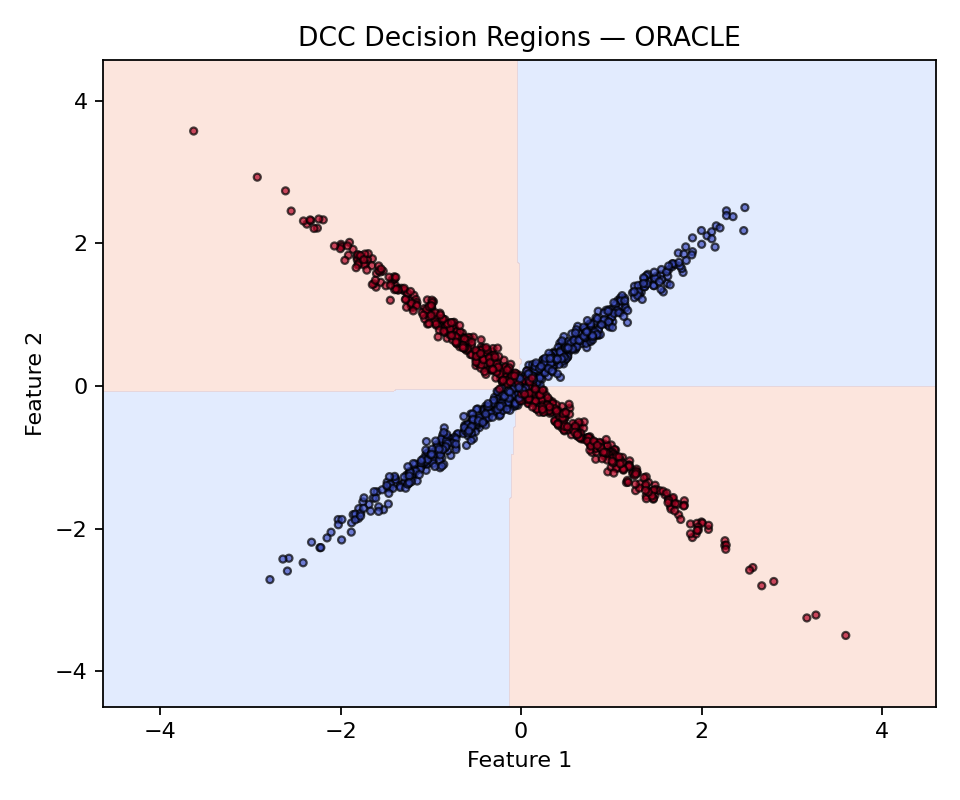}\\
  \vspace{0.25em}
  \includegraphics[width=.48\linewidth]{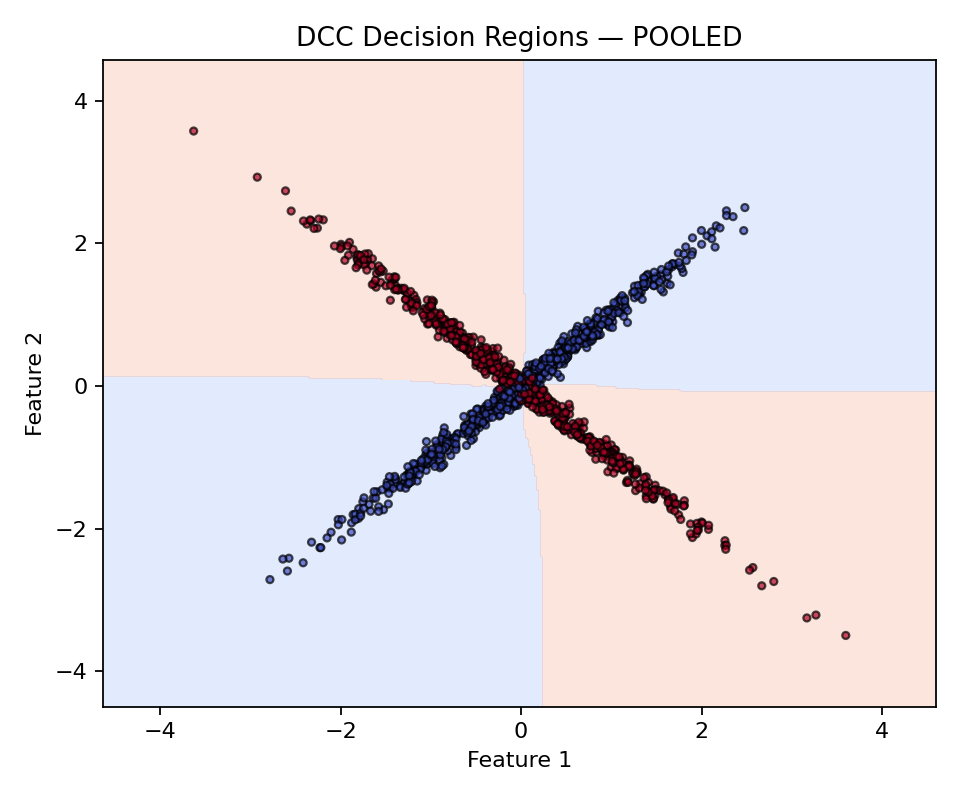}
  \includegraphics[width=.48\linewidth]{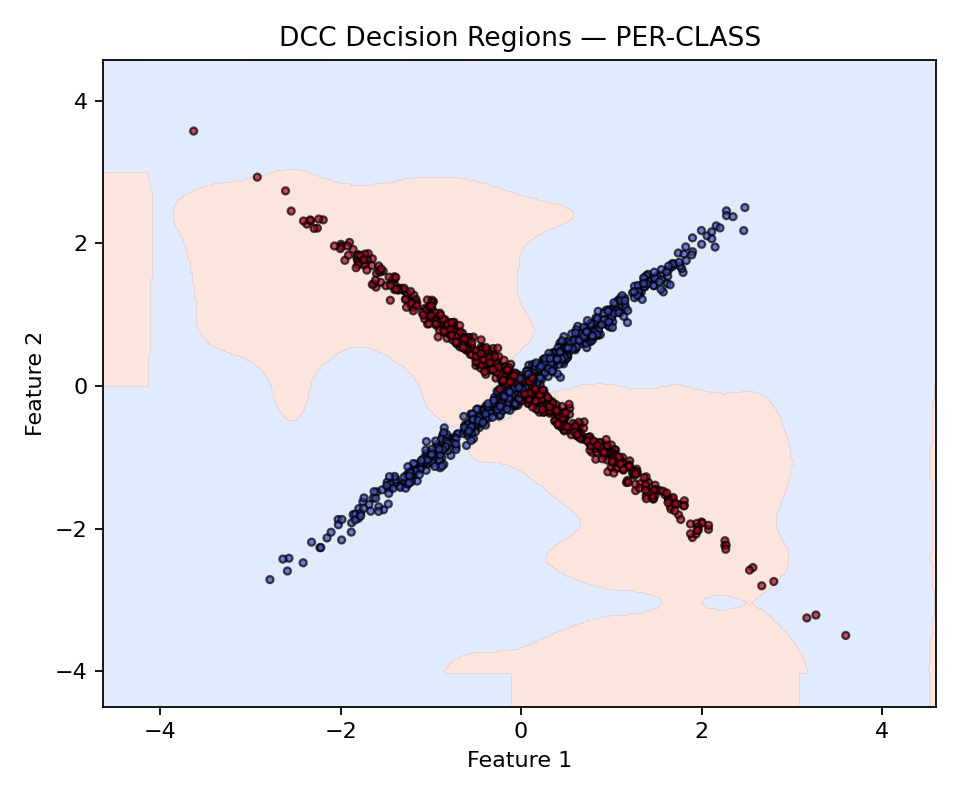}
  \caption{\textbf{Experiment 1 (synthetic).} Top-left: data ($\rho=\pm0.995$).
  Top-right/bottom-left/bottom-right: DCC decision regions with
  \texttt{oracle\_normal}, \texttt{pooled} KDE, and \texttt{per\_class} KDE marginals.}
  \label{fig:toy-grid}
\end{figure}

\paragraph{Results.}
Table~\ref{tab:toy} reports test Accuracy, ROC-AUC, and PR-AUC for one fixed seeded run (identical
hyperparameters across ablations). Oracle and pooled match the Monte-Carlo Bayes ceiling
($\approx 0.97$ accuracy for $|\rho|=0.995$) while the per-class KDE underperforms, as expected,
because each marginal is fit on half as many samples.

\begin{table}[ht]
\centering
\caption{\textbf{Experiment 1 (synthetic):} DCC ablations (calibrated). Higher is better.}
\label{tab:toy}
\begin{tabular}{lccc}
\toprule
\textbf{marginal\_mode} & \textbf{Accuracy} & \textbf{ROC-AUC} & \textbf{PR-AUC} \\
\midrule
\texttt{oracle\_normal} & 0.970000 & 0.997442 & 0.997527 \\
\texttt{pooled}         & 0.970833 & 0.997625 & 0.997674 \\
\texttt{per\_class}     & 0.873333 & 0.956844 & 0.965534 \\
\bottomrule
\end{tabular}
\end{table}

\subsection{Dataset Description (PIMA)}\label{sec:dataset-pima}
We additionally evaluate our method on a real-world dataset, namely the \textit{Pima Indians Diabetes Database}, a widely used clinical benchmark originating from the National Institute of Diabetes and Digestive and Kidney Diseases (NIDDK) and described by \citet{smith1988using}. The dataset comprises 768 adult female patients (age $\geq$ 21) of Pima Indian heritage, with \textbf{eight} diagnostic predictors and a binary diabetes label \texttt{Outcome}. A curated copy of this dataset was obtained from Kaggle.

\textbf{Target Variable:} The dataset includes a binary label \texttt{Outcome} (0 = no diabetes, 1 = diabetes). We use the label exactly as it appears in the repository. We did not relabel or combine any classes. Our study approaches diabetes risk prediction as a binary classification problem, matching the repository's label. This approach allows us to focus on evaluating risk factors and classification performance.

\textbf{Features:} The eight predictors are: \emph{Pregnancies}, \emph{Glucose}, \emph{BloodPressure}, \emph{SkinThickness}, \emph{Insulin}, \emph{BMI}, \emph{DiabetesPedigreeFunction}, and \emph{Age}. For completeness, Table~\ref{tab:pima_features} lists the variables and brief descriptions.

\begin{table}[ht]
\centering
\caption{\textbf{Summary of features in the PIMA dataset.} Brief one-line descriptions of each diagnostic predictor.}
\label{tab:pima_features}
\begin{tabularx}{\textwidth}{@{} l X @{}}
\toprule
\textbf{Feature} & \textbf{Description} \\
\midrule
Pregnancies & Number of pregnancies. \\
Glucose & 2-hour plasma glucose concentration (oral glucose tolerance test). \\
BloodPressure & Diastolic blood pressure (mm Hg). \\
SkinThickness & Triceps skinfold thickness (mm). \\
Insulin & 2-hour serum insulin (mu U/ml). \\
BMI & Body mass index $\left(\frac{\text{kg}}{\text{m}^2}\right)$. \\
DiabetesPedigreeFunction & Proxy for hereditary risk (family history signal). \\
Age & Age in years. \\
\bottomrule
\end{tabularx}
\end{table}

\textbf{Data characteristics:} The working CSV (768 $\times$ 9) contains no NA entries. As is standard for this dataset, several physiological fields may contain literal zeros (\emph{Glucose}, \emph{BloodPressure}, \emph{SkinThickness}, \emph{Insulin}, \emph{BMI}); these are present in the released files and are not explicit NA placeholders.

\subsection{Experiment 2: Real-world classification on PIMA}\label{sec:exp-pima}

\paragraph{Preprocessing.}
We treat zeros in \{\emph{Glucose}, \emph{BloodPressure}, \emph{SkinThickness}, \emph{Insulin}, \emph{BMI}\} as missing (set to NaN). To avoid leakage, we first split the data, then learn \emph{per-class} medians and per-class winsor bounds at $q=0.005$ on the \textit{fit} subset only, and apply those statistics to the calibration and test splits. No relabeling or class collapsing is performed.

\paragraph{Splits and calibration.}
A single fixed split is used for all methods: $70\%$ train/$30\%$ test (both stratified). From train, a $15\%$ stratified \emph{calibration} subset is carved out (held out from model fitting). Each model is fit on the fit subset only, then Platt (logistic) calibration is learned on the calibration subset and applied to the test scores. \emph{No hyperparameter tuning} is performed.

\paragraph{Models.}
\textbf{DCC} uses the paper-faithful $\tau{=}1$ rule,
$p(y\!\mid\!x)\propto \pi_y\, c_y(U)\,\prod_j f_{j\mid y}(x_j)$ with \emph{per-class} KDE marginals (bandwidth $h \propto n^{-0.51}$, scale $=10$). The positive MLP employs spectral normalization; depth $L=\lceil\log_2 n_y\rceil$ and width $\propto n_y^{d/(2r+d)}$ with $r{=}12$. The copula normalizer is a Sobol QMC expectation with $65{,}536$ points. Training uses Adam for $700$ epochs, batch size $128$, learning rate $8\times 10^{-4}$, and a mild histogram penalty ($\lambda=0.1$) encouraging uniform marginals on the QMC grid.
\textbf{Baselines} (scikit-learn defaults): Logistic Regression (with standardization), SVM with RBF kernel (with standardization; we calibrate its \texttt{decision\_function}), Random Forest (default, 100 trees), and Gaussian Naïve Bayes. All baselines are fit on the same fit split and Platt-calibrated on the calibration split.

\paragraph{Figures.}
Test-set ROC and PR curves (top row) and reliability diagrams (bottom) are shown in Fig.~\ref{fig:pima-curves}. Curves use the \emph{calibrated} probabilities. The reliability diagram reports Expected Calibration Error (ECE) with 10 uniform bins.

\begin{figure}[ht]
  \centering
  \includegraphics[width=.48\linewidth]{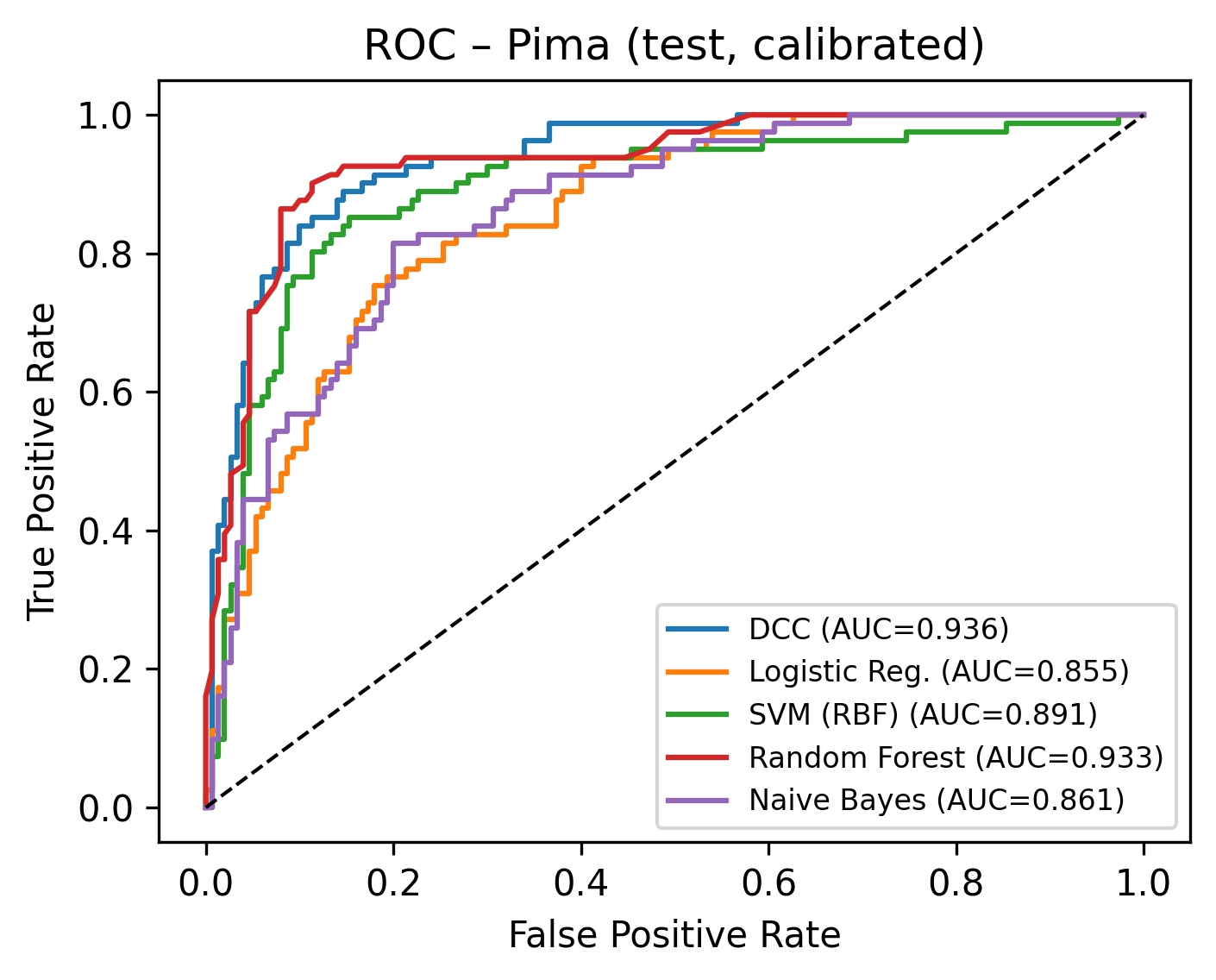}\hfill
  \includegraphics[width=.48\linewidth]{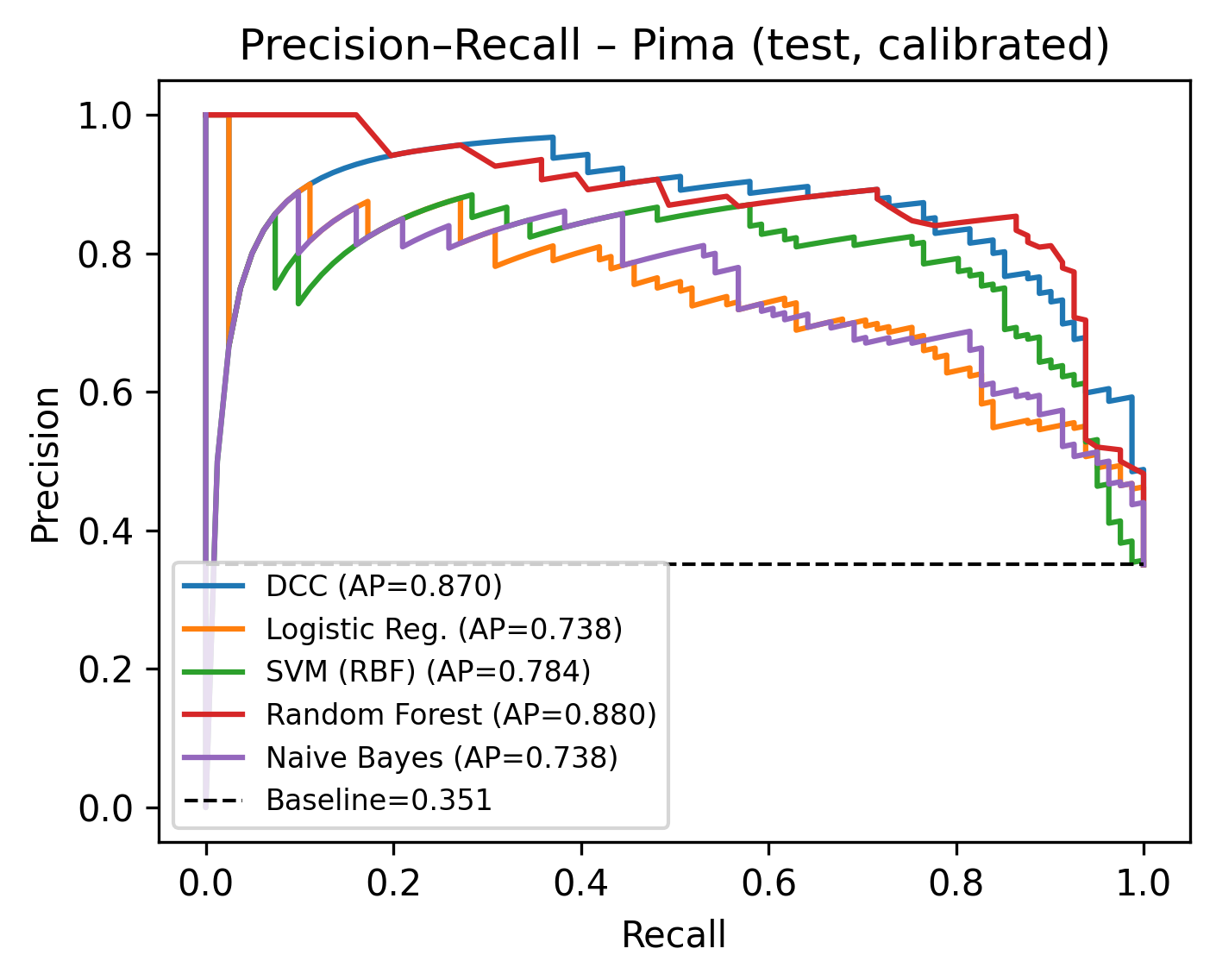}\\[0.35em]
  \includegraphics[width=.60\linewidth]{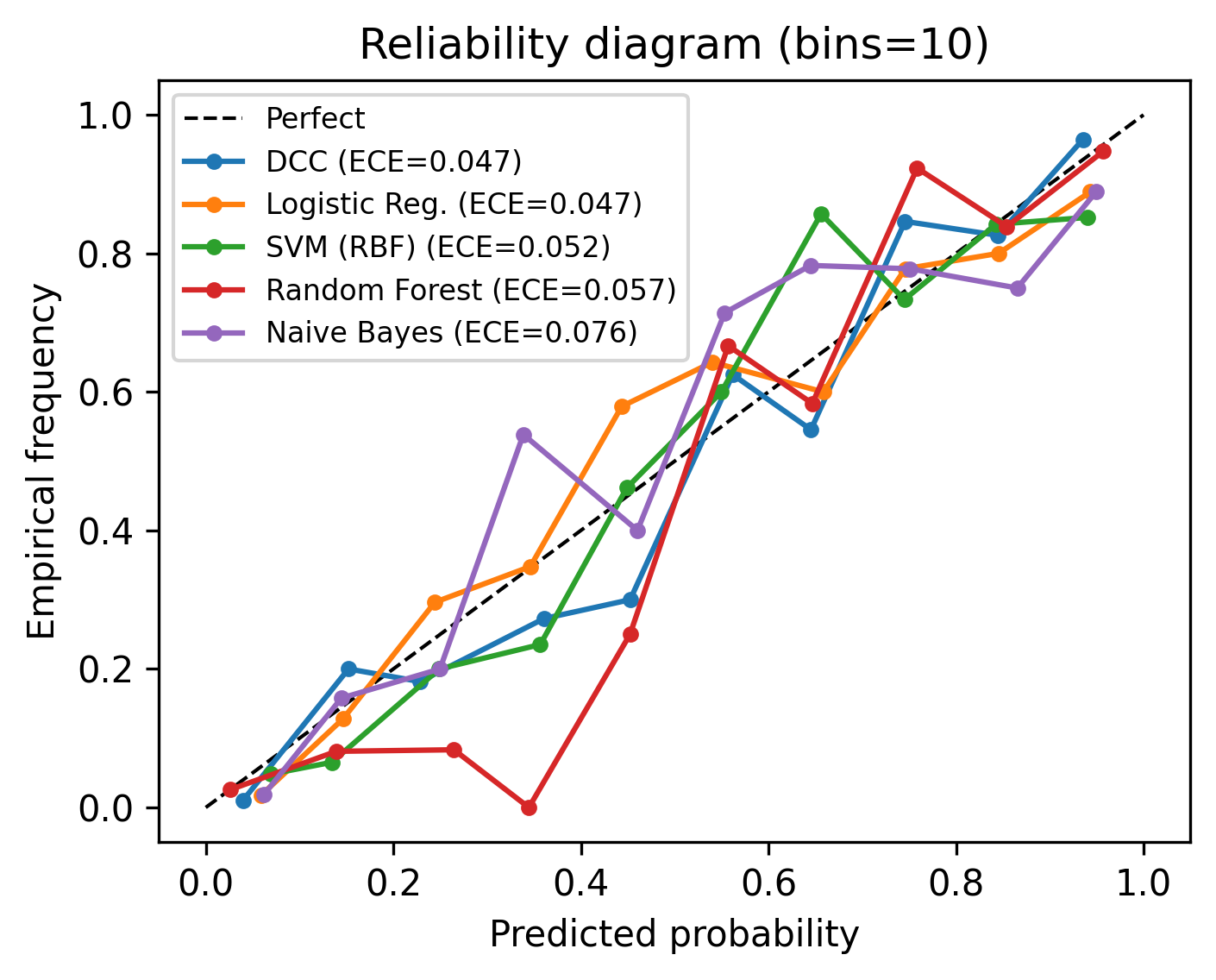}
  \caption{\textbf{PIMA (test, calibrated).} Top: ROC (left) and Precision--Recall (right).
  Bottom: reliability diagram (ECE in legend).}
  \label{fig:pima-curves}
\end{figure}

\paragraph{Calibration (ECE).}
The reliability diagram in Fig.~\ref{fig:pima-curves} is computed on the \emph{calibrated} probabilities using 10 uniform bins. DCC and Logistic Regression achieve the lowest Expected Calibration Error (ECE $=0.047$ each), followed by SVM ($0.052$), Random Forest ($0.057$), and Naïve Bayes ($0.076$). Thus, under the same Platt calibration, DCC’s probability estimates are as well-calibrated as the best classical baseline and better than the others.

\paragraph{Results.}
Table~\ref{tab:pima-results} summarizes Accuracy, ROC-AUC, and PR-AUC on the held-out test set (single fixed split). DCC attains the highest ROC-AUC and ties Logistic Regression for the best calibration (ECE $\approx 0.047$); Random Forest leads slightly on Accuracy and PR-AUC.

\begin{table}[ht]
\centering
\caption{\textbf{Experiment 2 (PIMA):} Test metrics with Platt calibration (no tuning). Higher is better.}
\label{tab:pima-results}
\begin{tabular}{lccc}
\toprule
\textbf{Model} & \textbf{Accuracy} & \textbf{ROC-AUC} & \textbf{PR-AUC} \\
\midrule
DCC ($\tau{=}1$, calibrated)      & 0.879 & \textbf{0.936} & 0.870 \\
Logistic Regression (calibrated)  & 0.779 & 0.855          & 0.738 \\
SVM (RBF, calibrated)             & 0.848 & 0.891          & 0.784 \\
Random Forest (calibrated)        & \textbf{0.892} & 0.933 & \textbf{0.880} \\
Na\"ive Bayes (calibrated)        & 0.792 & 0.861          & 0.738 \\
\bottomrule
\end{tabular}
\end{table}

\paragraph{Takeaways.}
On this small tabular benchmark, RF is strongest on Accuracy and PR-AUC; \textbf{DCC} is competitive overall, achieves the best ROC-AUC, and shows top-tier calibration (ECE $\approx$ 0.047, tied with Logistic Regression), while substantially outperforming the classical linear/probabilistic baselines on the discriminative metrics.

\paragraph{Reproducibility.}

\section{Conclusion and Future Work}\label{sec:con}

Our theoretical analysis shows that the Deep Copula Classifier (DCC) is Bayes-consistent and models feature dependencies using neural copula parameterization. By learning marginals and dependencies separately, DCC offers an interpretable framework. Practitioners can examine individual feature behaviors using $\hat{F}_{j|y}$ and interaction patterns with $NN_y$. The $O(n^{-r/(2r+d)})$ convergence rate for Sobolev-class copulas highlights the method's ability to adapt to the complexity of the data.

We supported our theory with two controlled studies that followed a consistent and fair protocol. All methods used identical fixed train, calibration, and test splits. No hyperparameter tuning was performed (defaults only). Models were fit on the training portion, a single Platt calibrator was learned on the calibration split, and the test set was held out for a one-shot evaluation. In Experiment 1, using correlated Gaussians with $|\rho|=0.995$, DCC learned decision regions that match the Bayes boundary. With the same hyperparameters across marginal strategies, the \texttt{oracle\_normal} and \texttt{pooled} settings reached near-ceiling performance (Accuracy $\approx 0.971$, ROC–AUC $\approx 0.998$, PR–AUC $\approx 0.998$), while \texttt{per\_class} underperformed due to finite-sample marginal estimation (Accuracy $=0.873$, ROC–AUC $=0.957$, PR–AUC $=0.966$). These results support the main claim that modeling dependencies directly helps when correlation drives class separation. On the Pima Indians Diabetes dataset, the calibrated DCC with $\tau=1$ achieved Accuracy $=0.879$, ROC–AUC $=0.936$, and PR–AUC $=0.870$. Among the baselines, Random Forest performed best on Accuracy and PR–AUC (Accuracy $=0.892$, ROC–AUC $=0.933$, PR–AUC $=0.880$), while DCC attained the highest ROC–AUC. Logistic Regression, SVM (RBF), and Gaussian Naïve Bayes had lower ROC–AUC scores than DCC ($0.855$, $0.891$, and $0.861$, respectively). Reliability curves show that DCC and Logistic Regression have the lowest and essentially tied Expected Calibration Error on the test set. Tree ensembles remain a reliable choice for small tabular problems. DCC works well when cross-feature dependence is important, when calibrated probabilities and an interpretable structure are needed, or when a compact generative classifier with clear likelihoods is preferred. Our results and theory indicate that DCC is a solid and fair baseline for tasks that require accounting for feature dependence.

In our experiments, we used $d\leq 8$ and moderate sample sizes. Scaling to higher $d$ is still difficult because of the challenges in estimating copula densities on $[0,1]^d$. We also set $\tau=1$ and used a basic per-class KDE. Exploring more complex marginal models and structured priors could lead to better results.

There are several ways to extend this work. The curse of dimensionality in copula estimation on $[0,1]^d$ poses significant challenges. To address this, one promising direction involves vine copula factorizations (\cite{nagler2017nonparametric}), which break down high-dimensional dependencies into tree-structured graphs that are easier to estimate. Additionally, low-rank tensor decompositions of the weight matrices within $NN_y$ may reduce parameter complexity while preserving expressive power. Another line of work would analyze the tradeoff between sparsity and sample size in regimes where $d \gg n$, providing theoretical insights into when DCC remains statistically reliable. Given its generative nature, DCC naturally lends itself to semi-supervised extensions under the cluster assumption (\cite{chapelle2009semi}). This opens the door to convergence analyses for entropy-regularized objectives that leverage both labeled and unlabeled data. In particular, DCC may benefit when marginal distributions $P(X)$ carry information about class boundaries, leading to provable gains in label efficiency. Furthermore, a rigorous comparison between DCC and discriminative semi-supervised methods could clarify the advantages of explicitly modeling dependencies. In streaming data settings, DCC can be extended to incorporate online learning mechanisms. One direction is to establish regret bounds for sequential updates to the copula density via mirror descent (\cite{jadbabaie2015online}). Beyond regret analysis, theoretical guarantees will be needed to understand how DCC behaves under class-incremental scenarios, especially regarding catastrophic forgetting. Additionally, DCC provides a unique opportunity to model and detect concept drift through shifts in learned dependence structures over time.

Although our focus has been theoretical, empirical observations raise questions that demand mathematical exploration. For example, DCC may exhibit phase transitions in classification performance as the strength of feature dependencies varies, phenomena that could be characterized through theoretical thresholds. Another promising direction is to derive information-theoretic lower bounds for copula-based classification, which would help identify when dependency modeling provides fundamental advantages over independence-based models.

The DCC framework opens new questions about balancing parametric and nonparametric components in hybrid models. Maintaining explicit probability models rather than black-box representations enables a precise characterization of when and how feature dependencies impact classification, a crucial step toward explainable AI for high-stakes decisions.


\section*{Code Availability}
Code will be released publicly upon acceptance.

\section*{Data Availability}
The PIMA Indians Diabetes dataset used in our Experiment 2 originates from NIDDK and \citet{smith1988using}; a curated copy used for this work was accessed via Kaggle.

\section*{Declarations}
\subsection*{Competing Interests}
The authors declare no competing interests.



\appendix
\section{Detailed Proofs}
\label{sec:appendix}

\subsection{A generative plug-in regret bound}\label{app:gen-bound}
\begin{lemma}[Generative regret bound for plug-in MAP]
Let $a_y(x)=\pi_y p_y(x)$ and $\hat a_y(x)=\hat\pi_y\,\widehat p_y(x)$ be nonnegative functions with $\sum_y a_y(x)=p_X(x)$ and $\sum_y \hat a_y(x)=\hat p_X(x)$ integrable on $\mathbb{R}^d$. 
Let $Y^*(x)=\arg\max_y a_y(x)$ and $\widehat Y(x)=\arg\max_y \hat a_y(x)$. Then
\[
R(\widehat Y)-R(Y^*) \;\le\; \int \sum_{y=1}^K \big|a_y(x)-\hat a_y(x)\big|\,dx.
\]
\end{lemma}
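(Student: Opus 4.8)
The plan is to reduce the excess risk to an $L^1$ distance between the score functions via the classical "regret equals integrated suboptimality" identity, and then to bound that suboptimality pointwise by a short telescoping argument; no probabilistic input (concentration, approximation) is needed — this is a deterministic inequality.

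First I would rewrite the misclassification risk of an arbitrary measurable classifier $g$ in terms of the $a_y$. By Bayes' rule, $\eta_y(x)\,p_X(x)=\pi_y p_y(x)=a_y(x)$ with $\eta_y(x)=\Pbb(Y=y\mid X=x)$, so the conditional risk of predicting $g(x)$ at $x$ is $1-\eta_{g(x)}(x)$ and hence
\[
R(g)\;=\;\int\big(1-\eta_{g(x)}(x)\big)\,p_X(x)\,dx\;=\;1-\int a_{g(x)}(x)\,dx.
\]
Applying this to $g=\widehat Y$ and $g=Y^*$ and subtracting gives
\[
R(\widehat Y)-R(Y^*)\;=\;\int\!\Big(a_{Y^*(x)}(x)-a_{\widehat Y(x)}(x)\Big)\,dx,
\]
where the integrand is pointwise nonnegative because $Y^*(x)$ maximizes $y\mapsto a_y(x)$.

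Next I would bound the integrand for each fixed $x$. Writing $y^*=Y^*(x)$ and $\hat y=\widehat Y(x)$, I split
\[
a_{y^*}(x)-a_{\hat y}(x)=\big(a_{y^*}(x)-\widehat a_{y^*}(x)\big)+\big(\widehat a_{y^*}(x)-\widehat a_{\hat y}(x)\big)+\big(\widehat a_{\hat y}(x)-a_{\hat y}(x)\big).
\]
The middle term is $\le 0$ since $\hat y$ maximizes $y\mapsto\widehat a_y(x)$, while the two outer terms are dominated by $|a_{y^*}(x)-\widehat a_{y^*}(x)|$ and $|a_{\hat y}(x)-\widehat a_{\hat y}(x)|$ respectively, each of which is at most $\sum_{y=1}^K|a_y(x)-\widehat a_y(x)|$. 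Integrating this pointwise bound over $\R^d$ — legitimate because the left-hand integrand is nonnegative and the right-hand side is integrable by the assumed integrability of $p_X$ and $\hat p_X$ — yields the claim.

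The argument is essentially routine; the only points needing care are (i) measurability of the argmax maps $Y^*$ and $\widehat Y$, handled by fixing a deterministic tie-break (smallest index, as in Algorithm~\ref{Algo2}), and (ii) the identity $\eta_y p_X=a_y$, which presumes $p_X=\sum_y a_y$ is a genuine marginal density for $X$ under $P_{X,Y}$ — true here by Assumption~\ref{ass:regularity}(1). The statistical content enters only later, when this deterministic bound is combined with rates for $\|a_y-\widehat a_y\|_1$; the main obstacle in this lemma is simply seeing the three-term split, which is the standard plug-in trick.
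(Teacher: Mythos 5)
Your proof is correct and follows essentially the same route as the paper's: the identity $R(g)=1-\int a_{g(x)}(x)\,dx$, a pointwise add-and-subtract of the estimated scores exploiting that $\widehat Y(x)$ maximizes $\hat a_\cdot(x)$, and integration of the resulting bound by $\sum_y|a_y-\hat a_y|$. The only cosmetic difference is that you telescope through $\hat a_{y^*}$ and $\hat a_{\hat y}$ while the paper adds and subtracts $\max_y\hat a_y(x)$; these are the same argument.
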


\begin{proof}
Write $R(g)=1-\int a_{g(x)}(x)\,dx$ and $R(Y^*)=1-\int \max_y a_y(x)\,dx$. Thus
\[
R(\widehat Y)-R(Y^*) \;=\; \int\!\big[\max_y a_y(x)-a_{\widehat Y(x)}(x)\big]\,dx.
\]
For each $x$, add and subtract $\max_y \hat a_y(x)$ and use that $\widehat Y(x)$ maximizes $\hat a$:
\[
\max_y a_y - a_{\widehat Y} 
\;\le\; \big(\max_y a_y - \max_y \hat a_y\big) + \big(\hat a_{\widehat Y}-a_{\widehat Y}\big)
\;\le\; \max_y|a_y-\hat a_y| + |\hat a_{\widehat Y}-a_{\widehat Y}|
\;\le\; \sum_{y=1}^K |a_y-\hat a_y|.
\]
Integrate over $x$ to conclude.
\end{proof}

\subsection{Proof of Theorem \ref{thm:consistency} (Consistency)} \label{app:consis}

We prove that, under the regularity conditions in Section~\ref{ass:regularity}, the Deep Copula Classifier (DCC) converges to the Bayes‐optimal classifier as \(n \to \infty\).  Recall that every “hat” quantity (\(\widehat F_{i\mid y},\,\widehat f_{i\mid y},\,\tilde c_y,\,\widehat p_{y},\,\widehat Y\), etc.) depends on the total sample size \(n\) because it is estimated from \(n\) i.i.d.\ training pairs.

\paragraph{Step 1: Marginal Convergence}  
By Assumption \ref{ass:regularity}(5),
\[
  \sup_{x \in \mathbb{R}}
  \bigl|\widehat F_{i\mid y}(x) - F_{i\mid y}(x)\bigr|
  \;=\;
  O_{p}\bigl(n^{-\alpha}\bigr),
  \qquad
  \alpha > \frac{r}{2r + d}.
\]
Define, for each training point \(X_i\),
\[
  u_i = F_{i\mid y}(X_i),
  \qquad
  \widehat u_i = \widehat F_{i\mid y}(X_i).
\]
Then directly,
\[
  \lvert \widehat u_i - u_i \rvert
  \;=\;
  \bigl|\widehat F_{i\mid y}(X_i) - F_{i\mid y}(X_i)\bigr|
  \;\le\;
  \sup_{x}\bigl|\widehat F_{i\mid y}(x) - F_{i\mid y}(x)\bigr|
  \;=\; O_p(n^{-\alpha}).
\]
Therefore the vector of pseudo‐observations
\(\widehat{\mathbf{u}} = (\widehat u_1,\dots,\widehat u_d)\) satisfies
\[
  \bigl\|\widehat{\mathbf{u}} - \mathbf{u}\bigr\|_{\infty}
  \;=\;
  O_{p}\bigl(n^{-\alpha}\bigr).
\]
(Assumption \ref{ass:regularity}(2) gives local bi-Lipschitz behavior away from boundaries; standard Winsorization/monotone smoothing near the boundary keeps rates unchanged.)

\paragraph{Step 2: Neural Copula Approximation}  
Assumption \ref{ass:regularity}(3) guarantees that our chosen class of neural networks can approximate any Sobolev‐smooth copula density.
In particular, since the true class‐conditional copula \(c_{y} \in W_{2}^{r}([0,1]^{d})\) with \(r > d/2\), \cite{schmidt2020nonparametric} shows there exists a ReLU network of depth \(O(\log n)\) and width \(O\bigl(n^{d/(2r+d)}\bigr)\) whose parameters \(\theta_{y}^{*}\) satisfy
\[
  \sup_{u \in [0,1]^{d}}
  \bigl|\,NN_{y}(u;\theta_{y}^{*}) - c_{y}(u)\bigr|
  \;=\;
  O\!\bigl(n^{-\,r/(2r+d)}\bigr).
\]
Let \(\tilde c_{y}(u) = \dfrac{NN_{y}(u;\widehat\theta_{y})}{\widehat Z_y(\widehat\theta_{y})}\) denote the normalized network after training on \(n_y\) samples, with \(\widehat Z_y(\widehat\theta_{y}) = \frac{1}{M}\sum_{m=1}^M NN_y(V^{(m)};\widehat\theta_{y})\).

By standard Rademacher‐complexity‐based MLE arguments (\cite{koltchinskii2001rademacher}), one shows
\[
  \mathbb{E}_{u \sim \mathrm{Unif}[0,1]^{d}}
  \bigl[\bigl|\,NN_{y}(u;\widehat\theta_{y}) - c_{y}(u)\bigr|^{2}\bigr]
  \;=\;
  O_{p}\!\Bigl(n^{-\,\tfrac{2r}{2r + d}} \;+\; n^{-1}\Bigr).
\]

Passing to the normalized estimator $\tilde c_y(u) = NN_y(u;\widehat\theta_y) / \widehat Z_y(\widehat\theta_y)$, write
\[
\tilde c_y(u) - c_y(u) = \frac{NN_y(u;\widehat\theta_y) - c_y(u)}{\widehat Z_y(\widehat\theta_y)} \;+\; c_y(u)\!\left(\frac{1}{\widehat Z_y(\widehat\theta_y)} - 1\right).
\]
Hence
\[
\|\tilde c_y - c_y\|_\infty \;\le\; \|NN_y(\cdot;\widehat\theta_y) - c_y\|_\infty \cdot \bigl|1/\widehat Z_y\bigr| \;+\; \|c_y\|_\infty \cdot \bigl|1/\widehat Z_y - 1\bigr|.
\]
By the approximation and estimation bounds above, $\|NN_y - c_y\|_\infty = O_p(n^{-r/(2r+d)})$ (up to logs). Moreover, $\widehat Z_y(\widehat\theta_y)=\int NN_y + O_p(M^{-1/2})$ and $\int NN_y = 1 + O_p(n^{-r/(2r+d)})$ because $NN_y$ uniformly approximates $c_y$ and $\int c_y = 1$. Thus $|\widehat Z_y - 1| = O_p(n^{-r/(2r+d)}) + O_p(M^{-1/2})$, so choosing $M$ as in Assumption (6) yields
\[
\|\tilde c_y - c_y\|_\infty \;=\; O_p\!\bigl(n^{-r/(2r+d)}\bigr).
\]

\paragraph{Step 3: Decision Rule Stability}  
For any test input \(x=(x_{1},\dots,x_{d})\), define
\[
  p_{y}(\mathbf{x})
  \;=\;
  c_{y}\bigl(F_{1\mid y}(x_{1}),\,\dots,\,F_{d\mid y}(x_{d})\bigr)
  \;\times\;\prod_{i=1}^{d}f_{i\mid y}(x_{i}),
\]
and its estimate
\[
  \widehat p_{y}(\mathbf{x})
  \;=\;
  \tilde c_{y}\bigl(\widehat{\mathbf{u}}\bigr)
  \;\times\;\prod_{i=1}^{d}\widehat f_{i\mid y}(x_{i}),
  \quad
  \text{where } \widehat{\mathbf{u}} = \bigl(\widehat u_{1},\dots,\widehat u_{d}\bigr).
\]
We bound 
\begin{align*}
  \bigl|\widehat p_{y}(\mathbf{x}) - p_{y}(\mathbf{x})\bigr|
  &= 
  \Bigl|\,
    \tilde c_{y}(\widehat{\mathbf{u}})\,\prod_{i=1}^{d}\widehat f_{i\mid y}(x_{i})
    \;-\; 
    c_{y}(\mathbf{u})\,\prod_{i=1}^{d}f_{i\mid y}(x_{i})
  \Bigr| \\[6pt]
  &\le 
  \underbrace{\bigl|\tilde c_y(\widehat{\mathbf{u}}) - c_{y}(\mathbf{u})\bigr|\;\prod_{i=1}^{d}\widehat f_{i\mid y}(x_{i})}
    _{\text{(A) Copula‐error}}
  \;+\;
  \underbrace{\bigl|c_{y}(\mathbf{u})\bigr|\;\bigl|\prod_{i=1}^{d}\widehat f_{i\mid y}(x_{i})
    - \prod_{i=1}^{d}f_{i\mid y}(x_{i})\bigr|}
    _{\text{(B) Marginal‐error}}.
\end{align*}

\noindent \textbf{(A) Copula‐error.}  Observe
\[
  \bigl|\tilde c_{y}(\widehat{\mathbf{u}}) - c_{y}(\mathbf{u})\bigr|
  \;\le\;
  \bigl|\tilde c_{y}(\widehat{\mathbf{u}}) - c_{y}(\widehat{\mathbf{u}})\bigr|
  \;+\;
  \bigl|c_{y}(\widehat{\mathbf{u}}) - c_{y}(\mathbf{u})\bigr|.
\]
1. From Step 2,
\(\sup_{u}\lvert\tilde c_{y}(u) - c_{y}(u)\rvert = O_{p}\!\bigl(n^{-\,r/(2r+d)}\bigr).\)  
2. By Assumption (3), $c_y$ is Lipschitz continuous on $[0,1]^d$. Let $L_c$ denote its Lipschitz constant. Then
\[
  \bigl|\,c_{y}(\widehat{\mathbf{u}}) - c_{y}(\mathbf{u})\bigr|
  \;\le\;
  L_{c}\,\bigl\|\widehat{\mathbf{u}} - \mathbf{u}\bigr\|_{\infty}
  \;=\;
  O_{p}\bigl(n^{-\alpha}\bigr)
  \quad(\text{from Step 1}).
\]
Hence
\[
  \sup_{x}\bigl|\tilde c_{y}(\widehat{\mathbf{u}}) - c_{y}(\mathbf{u})\bigr|
  \;=\;
  O_{p}\!\bigl(n^{-\,\tfrac{r}{2r+d}} + n^{-\alpha}\bigr).
\]
Since each \(\widehat f_{i\mid y}(x_{i})\) is \(O_{p}(1)\) uniformly (Assumption \ref{ass:regularity}(2)), it follows that
\[
  \sup_{x}
  \Bigl|\tilde c_{y}(\widehat{\mathbf{u}}) - c_{y}(\mathbf{u})\Bigr|\,
  \prod_{i}\widehat f_{i\mid y}(x_{i})
  \;=\;
  O_{p}\!\bigl(n^{-\,\tfrac{r}{2r+d}} + n^{-\alpha}\bigr).
\]

\noindent \textbf{(B) Marginal‐error.}  Write \(g_{i}(x_{i}) = f_{i\mid y}(x_{i})\) and \(\widehat g_{i}(x_{i}) = \widehat f_{i\mid y}(x_{i})\).  Then
\[
  \prod_{i=1}^{d}\widehat g_{i}(x_{i}) - \prod_{i=1}^{d}g_{i}(x_{i})
  \;=\;
  \sum_{j=1}^{d}
    \bigl(\widehat g_{j}(x_{j}) - g_{j}(x_{j})\bigr)
    \;\prod_{i<j}\widehat g_{i}(x_{i})\;
    \prod_{i>j}g_{i}(x_{i}).
\]
By Step 1 and standard kernel‐density theory \citep{silverman1986density,wasserman2006all}, 
\(\sup_{x_{i}}\lvert\widehat g_{i}(x_{i}) - g_{i}(x_{i})\rvert = O_{p}(n^{-\alpha})\).  Since each \(\widehat g_{i}\) and \(g_{i}\) is bounded in probability by some constant \(M\), each summand is 
\[
  O_{p}(n^{-\alpha}) \times M^{\,d-1} = O_{p}(n^{-\alpha}).
\]
Because there are \(d\) terms, 
\[
  \sup_{x}
  \bigl|\prod_{i=1}^{d}\widehat g_{i}(x_{i}) - \prod_{i=1}^{d}g_{i}(x_{i})\bigr|
  \;=\;
  O_{p}\bigl(n^{-\alpha}\bigr).
\]
Since \(c_{y}(\mathbf{u})\) is bounded by Sobolev embedding, it follows that
\[
  \sup_{x}
  \Bigl|\,c_{y}(\mathbf{u})\,\bigl(\prod_{i}\widehat f_{i\mid y}(x_{i}) 
    - \prod_{i}f_{i\mid y}(x_{i})\bigr)\Bigr|
  \;=\;
  O_{p}\bigl(n^{-\alpha}\bigr).
\]

\paragraph{Putting (A) and (B) Together}  
Combining the bounds for (A) and (B) gives
\[
  \sup_{x} 
  \bigl|\,\widehat p_{y}(\mathbf{x}) - p_{y}(\mathbf{x})\bigr|
  \;=\;
  O_{p}\!\bigl(n^{-\,\tfrac{r}{2r+d}} + n^{-\alpha}\bigr).
\]
Since \(\alpha > \tfrac{r}{2r + d}\) by Assumption \ref{ass:regularity}(5), the term \(n^{-\alpha}\) is asymptotically smaller, so
\[
  \sup_{x} 
  \bigl|\,\widehat p_{y}(\mathbf{x}) - p_{y}(\mathbf{x})\bigr|
  \;=\;
  O_{p}\!\bigl(n^{-\,\tfrac{r}{2r+d}}\bigr).
\]
The classifier uses the joint distribution $\widehat{\pi}_y \widehat{p}_{y}(\mathbf{x})$. Given the consistent estimation of the prior ($\hat{\pi}_y \to \pi_y$) and the uniform convergence of the density, we have:
\[
\arg\max_{y} \hat{\pi}_y \hat{p}_{y}(\mathbf{x}) \xrightarrow{p} \arg\max_{y} \pi_y p_{y}(\mathbf{x}).
\]
by a standard argmax-continuity argument (e.g., Portmanteau lemma \cite{vandervaart2012asymptotic} plus union bound over the finitely many classes). Therefore $\widehat{Y}$ converges in probability to the Bayes-optimal decision $Y^{*}$.  This completes the proof of Theorem~\ref{thm:consistency}.

\subsection{ Proof of Theorem \ref{thm:convergence_rate} (Convergence Rate)} \label{app:convrate}

We now show that, under the same regularity assumptions plus \(c_{y}\in W_{2}^{r}([0,1]^{d})\), the excess classification risk decays at rate \(n^{-\,r/(2r+d)}\).

\noindent Define
\[
  E(n) 
  \;=\; 
  \Pbb\bigl(\widehat Y_{n}(\mathbf{X})\neq Y\bigr)
  \;-\; 
  \Pbb\bigl(Y^{*}(\mathbf{x})\neq Y\bigr),
  \quad
  Y^{*}(\mathbf{x}) \;=\; \arg\max_{y} \pi_y \, p_{y}(\mathbf{x}).
\]
By Lemma~\ref{app:gen-bound}, the plug-in regret is controlled by the $L^1$ error of the class-conditional joints.

Hence,
\[
E(n) \;\le\; \sum_{y=1}^K \big\|\hat\pi_y\,\widehat p_y-\pi_y p_y\big\|_{1}
\;=\; O_p\!\big(n^{-r/(2r+d)}\big),
\]
since $|\hat\pi_y-\pi_y|=O_p(n^{-1/2})$ and $\|\widehat p_y-p_y\|_1=O_p(n^{-r/(2r+d)})$.

\begin{enumerate}
  \item \textbf{Copula‐Approximation Error:}  
    By \cite{schmidt2020nonparametric}, there exists a ReLU network of depth \(O(\log n)\), width \(O\bigl(n^{d/(2r+d)}\bigr)\), and parameters \(\theta_{y}^{*}\) such that
    \[
      \inf_{\theta}\bigl\|\,NN_{y}(\cdot;\theta) - c_{y}\bigr\|_{L^{2}([0,1]^{d})}
      \;\le\; 
      C\,n^{-\,r/(2r+d)}.
    \]
    Sobolev embedding (since \(r>d/2\)) implies
    \[
      \sup_{u\in[0,1]^{d}}
      \bigl|\,NN_{y}(u;\theta_{y}^{*}) - c_{y}(u)\bigr|
      \;=\; 
      O\!\bigl(n^{-\,r/(2r+d)}\bigr).
    \]
    Consequently,
    \[
      \bigl\|\,NN_{y}(\cdot;\theta_{y}^{*})\,\prod_{i}f_{i\mid y} 
        \;-\; 
        c_{y}\,\prod_{i}f_{i\mid y}\bigr\|_{L^{1}}
      \;=\; 
      O\!\bigl(n^{-\,r/(2r+d)}\bigr),
      \quad
      \Bigl(\int \prod_{i}f_{i\mid y} = 1\Bigr).
    \]

  \item \textbf{Network‐Estimation Error.}  
Let 
\[
\begin{split}
\mathcal{F}_{y,n} 
&= 
\bigl\{\,u \mapsto NN_{y}(u;\theta)\;:\;
   \theta \text{ ranges over all ReLU‐networks of depth }L_{n}=O(\log n) \\
&\qquad\quad\text{and width }W_{n}=O\bigl(n^{d/(2r+d)}\bigr)\bigr\}.
\end{split}
\]

We train \(\widehat\theta_{y}\) by maximizing the normalized log‐likelihood of
\(u \mapsto \dfrac{NN_{y}(u;\theta)}{\widehat Z_y(\theta)}\) on the \(n_{y}\) ``pseudo‐observations”
\(\{\,u^{(i)} = \widehat{\mathbf{u}}^{(i)}\mid y^{(i)}=y\,\}\), each distributed roughly like \(\mathrm{Unif}[0,1]^{d}\).

In \cite{farrell2021deep}, it is shown that if a function class \(\mathcal{F}\) has Gaussian‐complexity (or Rademacher‐complexity) bounded by \(\mathfrak{R}(n)\), then the empirical‐risk minimizer \(\widehat f\) over \(\mathcal{F}\) satisfies with probability at least \(1-\delta\):
  \[
    \mathbb{E}_{u\sim\mathrm{Unif}[0,1]^{d}}
    \bigl[\bigl|\,\widehat f(u) - c_{y}(u)\bigr|^{2}\bigr]
    \;\le\;
    \inf_{f\in\mathcal{F}}\mathbb{E}_{u}\bigl[\bigl|\,f(u) - c_{y}(u)\bigr|^{2}\bigr]
    \;+\; 
    O\!\Bigl(\mathfrak{R}(n)\Bigr)
    \;+\; 
    O\!\bigl(n^{-1}\bigr).
  \]
In our setting:
  \begin{itemize}
    \item The \emph{approximation bias} 
      \(\displaystyle \inf_{\theta}\mathbb{E}_{u}\bigl[\bigl|\,NN_{y}(u;\theta) - c_{y}(u)\bigr|^{2}\bigr] \)
      is already \(O(n^{-\,2r/(2r+d)})\) by the same Sobolev‐approximation argument of Schmidt–Hieber (since \(\sup_{u}|NN_{y}(u;\theta_{y}^*) - c_{y}(u)| = O(n^{-\,r/(2r+d)})\), and thus its \(L^2\)-norm is \(O(n^{-\,2r/(2r+d)})\)).  
    \item The \emph{Gaussian complexity} \(\mathfrak{R}(n)\) of \(\mathcal{F}_{y,n}\) can be bounded by  
      \[
        \mathfrak{R}(n) 
        \;\approx\; 
        O\!\Bigl(\sqrt{\tfrac{L_n\,W_n}{n}}\Bigr)
        \;=\; 
        O\!\Bigl(\sqrt{\tfrac{(\log n)\,n^{\,d/(2r+d)}}{n}}\Bigr)
        \;=\; 
        O\!\Bigl(n^{-\,\tfrac{r}{2r+d}}\Bigr),
      \]
      up to logarithmic factors.  Concretely, standard covering‐number or Rademacher‐complexity bounds for ReLU networks ( \cite{bartlett2002rademacher}; \cite{koltchinskii2001rademacher}) show that
      \[
        \mathfrak{R}(n) \;=\; O\!\Bigl(\sqrt{\tfrac{LW}{n}}\Bigr)
        \quad\text{whenever}
        \quad L = O(\log n),\ W = O\bigl(n^{d/(2r+d)}\bigr).
      \]
Hence 
      \(\mathfrak{R}(n) = O\bigl(n^{-\,r/(2r+d)}\bigr)\).  
  \end{itemize}

  Therefore, from \cite{farrell2021deep}\ with 
  \(\inf_{f\in\mathcal{F}_{y,n}} \mathbb{E}[|f(u)-c_{y}(u)|^2] = O(n^{-\,2r/(2r+d)})\)
  and \(\mathfrak{R}(n) = O(n^{-\,r/(2r+d)})\), we get, \emph{with probability at least \(1-\delta\)}, that
  \[
    \mathbb{E}_{u}\bigl[\bigl|\,NN_{y}(u;\widehat\theta_{y}) - c_{y}(u)\bigr|^{2}\bigr]
    \;=\; 
    O\!\Bigl(n^{-\,\tfrac{2r}{2r+d}}\Bigr)
    \;+\; 
    O\!\Bigl(n^{-\,\tfrac{r}{2r+d}}\Bigr)
    \;+\; 
    O\!\bigl(n^{-1}\bigr).
  \]
  Since \(r>d/2\), the largest of these exponents is \(\tfrac{2r}{2r+d} > \tfrac{r}{2r+d}\).  In big‐\(O\) notation, one thus writes
  \[
    \mathbb{E}_{u}\bigl[\bigl|\,NN_{y}(u;\widehat\theta_{y}) - c_{y}(u)\bigr|^{2}\bigr]
    \;=\;
    O_{p}\!\Bigl(n^{-\,\tfrac{2r}{2r+d}} + n^{-1}\Bigr).
  \]
  Finally, because \(c_{y}\in W_{2}^{r}([0,1]^{d})\) with \(r>d/2\), Sobolev‐embedding guarantees an \(L^2\to L^{\infty}\) upgrade (up to logs):
  \[
    \sup_{u\in[0,1]^{d}}
    \bigl|\tilde c_y(u) - c_{y}(u)\bigr|
    \;=\;
    O_{p}\!\bigl(n^{-\,r/(2r+d)}\bigr).
  \]
  Hence overall
  \[
    \bigl\|\,\tilde c_{y}(\cdot)\,\prod_{i} f_{i\mid y} 
      \;-\; 
      c_{y}\,\prod_{i} f_{i\mid y}\bigr\|_{L^{1}}
    \;=\;
    O_{p}\!\bigl(n^{-\,r/(2r+d)}\bigr).
  \]

  \item \textbf{Marginal‐Estimation Error:}  
    Under the regularity conditions in Section~\ref{ass:regularity}(5),
    \[
      \sup_{x_{i}} 
      \bigl|\widehat F_{i\mid y}(x_{i}) - F_{i\mid y}(x_{i})\bigr|
      \;=\;
      O_{p}\bigl(n^{-\alpha}\bigr),
      \quad
      \alpha > \frac{r}{2r + d}.
    \]
    By standard kernel‐density theory \citep{silverman1986density,wasserman2006all}, 
    \(\sup_{x_{i}}\lvert\widehat f_{i\mid y}(x_{i}) - f_{i\mid y}(x_{i})\rvert = O_{p}(n^{-\alpha})\).  Since each \(f_{i\mid y}\) is bounded away from zero and infinity on its compact support (Assumption 2), a telescoping product‐difference argument shows
    \[
      \sup_{x}
      \bigl|\prod_{i=1}^{d}\widehat f_{i\mid y}(x_{i}) - \prod_{i=1}^{d}f_{i\mid y}(x_{i})\bigr|
      \;=\;
      O_{p}\bigl(n^{-\alpha}\bigr).
    \]
    Moreover, \(c_{y}\) is bounded by Sobolev embedding, so
    \[
      \bigl\|\,c_{y}\,\bigl(\prod_{i}\widehat f_{i\mid y} 
        - \prod_{i}f_{i\mid y}\bigr)\bigr\|_{L^{1}}
      \;=\;
      O_{p}\bigl(n^{-\alpha}\bigr).
    \]
\end{enumerate}

\noindent Combining 1,2 and 3, we conclude for each class \(y\):
\[
  \bigl\|\widehat p_{y} - p_{y}\bigr\|_{L^{1}}
  \;=\;
  O_{p}\!\bigl(n^{-\,r/(2r+d)} + n^{-\alpha}\bigr)
  \;=\;
  O_{p}\!\bigl(n^{-\,r/(2r+d)}\bigr),
\]
since \(\alpha > r/(2r+d)\). Therefore, the dominant term in the excess risk bound is:
\[
  E(n)
  \;=\;
  C \, \sum_{y=1}^{K} \|\widehat p_{y} - p_{y}\|_{L^{1}} + O_p(n^{-1/2})
  \;=\;
  O_{p}\!\bigl(n^{-\,r/(2r+d)}\bigr).
\]
This completes the proof of Theorem~\ref{thm:convergence_rate}.  

\textit{Remark.} Any extra \(\log n\) or \(\log(1/\delta)\) factors from covering‐number or union‐bound arguments are absorbed into the \(O_{p}(\cdot)\) notation and do not affect the leading exponent \(r/(2r+d)\).

\subsection{Proof of Proposition~\ref{prop:valid} (Asymptotic copula validity)}\label{app:proof-prop-valid}

Fix a class $y$ and drop the subscript for readability. Let
\[
\ell_n(\theta)\;=\;\frac{1}{n_y}\sum_{i:\,Y^{(i)}=y}\log \tilde c(\widehat u^{(i)};\theta), 
\qquad
D(\theta)\;=\;\sum_{i=1}^d \int_0^1\!\Big(\int \tilde c(u;\theta)\,du_{-i}-1\Big)^2 du_i,
\]
and define the empirical penalized criterion $Q_n(\theta)=\ell_n(\theta)-\lambda_n D(\theta)$ with maximizer $\hat\theta\in\arg\max_\theta Q_n(\theta)$.
We prove $D(\hat\theta)\xrightarrow{p}0$.

\paragraph{Step 1 (Empirical PIT vs.\ true PIT).}
Let $u^{(i)}=F(\,X^{(i)}\,)$ be the true PIT and $\widehat u^{(i)}=\widehat F(\,X^{(i)}\,)$ the empirical/smoothed PIT used in training.
By Assumption~\ref{ass:regularity}\,(5a),
$\|\widehat F-F\|_\infty=O_p(n_y^{-1/2})$; since $\log\tilde c(\cdot;\theta)$ is uniformly Lipschitz on $[0,1]^d$ under the spectral/weight regularization of Section~\ref{sec:nnarch}, we have
\[
\sup_{\theta}\;|\ell_n(\theta)-\tilde\ell_n(\theta)| \;=\; o_p(1),
\quad
\tilde\ell_n(\theta) \;=\; \frac{1}{n_y}\sum_{i:\,y}\log \tilde c(u^{(i)};\theta).
\]
Thus we can work with true PITs without changing maximizers asymptotically.

\paragraph{Step 2 (MC normalizer is uniformly accurate).}
Write $Z(\theta)=\int_{[0,1]^d}NN(v;\theta)\,dv$ and $\widehat Z(\theta)=(1/M)\sum_{m=1}^M NN(V^{(m)};\theta)$ with $V^{(m)}\sim\Unif([0,1]^d)$ i.i.d.
By standard empirical process bounds for bounded Lipschitz network classes (covering numbers of ReLU nets with spectral bounds), 
\[
\sup_{\theta}\,|\widehat Z(\theta)-Z(\theta)| \;=\; O_p\!\Big(\sqrt{\tfrac{\mathcal{C}_n}{M}}\Big),
\]
where $\mathcal{C}_n$ is a complexity term polynomial in the network size $S_n$ (hidden in u.c.b.). By Assumption~\ref{ass:regularity}\,(6), $M^{-1/2}=o(n^{-r/(2r+d)})$, hence $\sup_\theta|\log\widehat Z(\theta)-\log Z(\theta)|=o_p(1)$ and
\[
\sup_{\theta}\Big|\,\log \tilde c(u;\theta)-\big(\log NN(u;\theta)-\log Z(\theta)\big)\Big|=o_p(1)\quad\text{uniformly in }u.
\]
Consequently, $\sup_{\theta}|\tilde\ell_n(\theta)-\bar\ell_n(\theta)|=o_p(1)$ with
$\bar\ell_n(\theta)=\frac{1}{n_y}\sum_i (\log NN(u^{(i)};\theta)-\log Z(\theta))$.

\paragraph{Step 3 (Uniform LLN and the population criterion).}
Let $\ell(\theta)=\E[\log \tilde c(U;\theta)]$ with $U\sim c$ the true copula.
The class $\{\log \tilde c(\cdot;\theta):\theta\in\Theta_n\}$ is Glivenko–Cantelli under the same complexity control (bounded outputs/Lipschitz + growing $S_n$ per Assumption~\ref{ass:regularity}\,(3)),
so $\sup_{\theta}|\tilde\ell_n(\theta)-\ell(\theta)|=o_p(1)$.
Define the population penalized functional
\[
Q(\theta)\;=\;\ell(\theta)-\lambda_n D(\theta).
\]
Note $\ell(\theta)=\mathrm{const}-\KL(c\,\|\,\tilde c(\cdot;\theta))$; thus, for any density candidate $g$ on $[0,1]^d$, $\ell(g)\le \ell(c)$ with equality iff $g=c$ a.e.

\paragraph{Step 4 (There exist near-copula approximants in the model class).}
By Assumption~\ref{ass:regularity}\,(3), for any $\varepsilon>0$ there is $\theta_\varepsilon$ such that $\|NN(\cdot;\theta_\varepsilon)-c\|_1\le \varepsilon$.
After normalization, $\|\tilde c(\cdot;\theta_\varepsilon)-c\|_1\le C\varepsilon$. Fubini’s theorem gives, for each margin $i$,
\[
\sup_{u_i\in[0,1]} \left| \int \tilde c(u;\theta_\varepsilon)\,du_{-i}-1\right|
\;\le\;\|\tilde c(\cdot;\theta_\varepsilon)-c\|_1 \;\le\; C\varepsilon,
\]
so $D(\theta_\varepsilon)\le C\varepsilon^2$ and $\ell(\theta_\varepsilon)\ge \ell(c)-C\varepsilon$.
Choosing a sequence $\varepsilon=\varepsilon_n\downarrow 0$ and corresponding $\theta_n^\circ$ in the growing class (size $S_n$) yields
\[
\ell(\theta_n^\circ)=\ell(c)-o(1),\qquad D(\theta_n^\circ)=o(1).
\]

\paragraph{Step 5 (Penalty forces the defect to zero).}
By optimality of $\hat\theta$ for $Q_n$ and the uniform $o_p(1)$ approximations from Steps 0–2,
\[
\ell(\hat\theta)-\lambda_n D(\hat\theta) \;\ge\; \ell(\theta_n^\circ)-\lambda_n D(\theta_n^\circ) - o_p(1).
\]
Rearrange:
\[
\lambda_n\big(D(\hat\theta)-D(\theta_n^\circ)\big) \;\le\; \ell(\hat\theta)-\ell(\theta_n^\circ) + o_p(1).
\]
The RHS is $O_p(1)$ uniformly (both terms are bounded by u.c.b.), hence
\[
D(\hat\theta) \;\le\; D(\theta_n^\circ) + O_p\!\big(\lambda_n^{-1}\big) \;=\; o_p(1)
\]
because $D(\theta_n^\circ)=o(1)$ and $\lambda_n\to\infty$.
This proves the first claim.

\paragraph{Step 6 (Rate compatibility).}
Since $D(\hat\theta)=o_p(1)$ and $\tilde c(\cdot;\hat\theta)$ remains in a uniformly bounded Lipschitz ball (spectral control), the marginal-uniformity error contributes at most $o_p(n^{-r/(2r+d)})$ to the density sup-norm error by the same linear functional bound as in Step 3; thus it does not change the dominant classification rate $n^{-r/(2r+d)}$ used in Theorem~\ref{thm:convergence_rate}.

\hfill$\square$

\subsection{Proof of Corollary~\ref{cor:fast-rate-mc} (Fast excess-risk under a multiclass margin)}\label{app:proof-fast-rate-mc}

Recall $a_y(x)=\pi_y p_y(x)$, $\hat a_y(x)=\hat\pi_y\,\widehat p_y(x)$,
$\eta_y(x)=a_y(x)/A(x)$ with $A(x)=\sum_{k=1}^K a_k(x)=p_X(x)$,
and likewise $\widehat\eta_y(x)=\hat a_y(x)/\widehat A(x)$ with $\widehat A=\sum_k \hat a_k$.
Write $\varepsilon(x)=\max_{y\in[K]}|\widehat\eta_y(x)-\eta_y(x)|$ and
$\Delta(x)=\eta_{(1)}(x)-\eta_{(2)}(x)$.

\paragraph{Step 1 (From joint-density error to conditional-probability error in $L^1(P_X)$).}
For every $x$ and $y$,
\[
A(x)\,|\widehat\eta_y(x)-\eta_y(x)|
=\Big|\widehat a_y(x) - a_y(x) - \eta_y(x)\big(\widehat A(x)-A(x)\big)\Big|
\le \sum_{k=1}^K \big|\widehat a_k(x)-a_k(x)\big|.
\]
Taking $\max_y$ and integrating over $\mathbb{R}^d$,
\[
\int \varepsilon(x)\, dP_X(x)
= \int A(x)\,\varepsilon(x)\,dx
\;\le\; K \sum_{k=1}^K \int |\widehat a_k(x)-a_k(x)|\,dx.
\]
By Theorem~\ref{thm:convergence_rate} and $|\hat\pi_k-\pi_k|=O_p(n^{-1/2})$,
$\sum_k \|\widehat a_k-a_k\|_{L^1}=O_p\!\big(n^{-\frac{r}{2r+d}}\big)$.
Hence
\begin{equation}\label{eq:l1-eta}
\|\widehat\eta-\eta\|_{L^1(P_X)} \;=\; \E[\varepsilon(X)]
\;=\; O_p\!\big(n^{-\tfrac{r}{2r+d}}\big).
\end{equation}

\paragraph{Step 2 (Uniform control on the effective support).}
Let $S_n$ be the set from Assumption~\ref{ass:regularity-mixlower} with
$P_X(S_n)\to 1$ and $\inf_{x\in S_n} A(x)\ge c_0>0$.
For $x\in S_n$ and any $y$,
\[
|\widehat\eta_y(x)-\eta_y(x)|
\le \frac{1}{A(x)}\sum_{k=1}^K |\widehat a_k(x)-a_k(x)|
\le \frac{1}{c_0}\sum_{k=1}^K |\widehat a_k(x)-a_k(x)|.
\]
By the uniform bound established in the proof of Theorem~\ref{thm:convergence_rate},
$\sup_x|\widehat p_k(x)-p_k(x)| = O_p\!\big(n^{-\tfrac{r}{2r+d}}\big)$ (up to logs), and
$|\hat\pi_k-\pi_k|=O_p(n^{-1/2})$, hence
\[
\sup_{x\in S_n} \varepsilon(x)
\;=\; O_p\!\big(n^{-\tfrac{r}{2r+d}}\big).
\]
The contribution from $S_n^c$ is negligible since $P_X(S_n^c)\to 0$ and $\varepsilon\le 2$.

\paragraph{Step 3 (Margin transfer: excess risk vs.\ $\varepsilon$).}
Let $\widehat Y$ be the plug-in classifier and $Y^*$ the Bayes rule.
A standard argument (e.g., Audibert–Tsybakov, 2007; multiclass reduction to top-two comparison) yields
\[
R(\widehat Y)-R(Y^*)
\;\le\; C_1\,\E\!\left[\,\varepsilon(X)^{1+\kappa}\right],
\]
whenever the multiclass Tsybakov margin condition $P(\Delta(X)\le t)\le C t^\kappa$ holds.

\paragraph{Step 4 (Putting the bounds together).}
On $S_n$, $\varepsilon(X)\le C_2\,n^{-\frac{r}{2r+d}}$ with probability $1-o(1)$.
Thus $\E[\varepsilon(X)^{1+\kappa}\,\mathbf{1}_{S_n}]
\le C_2^{1+\kappa}\,n^{-\frac{(1+\kappa)r}{2r+d}}$.
On $S_n^c$, $\varepsilon\le 2$ and $P_X(S_n^c)=o(1)$, so
$\E[\varepsilon^{1+\kappa}\,\mathbf{1}_{S_n^c}]=o(1)\cdot 2^{1+\kappa}$.
Therefore,
\[
R(\widehat Y)-R(Y^*)
\;=\; O_p\!\Big(n^{-\tfrac{(1+\kappa)r}{2r+d}}\Big).
\qquad\qquad\Box
\]


\begin{thebibliography}{}
\bibitem[Aich, Aich and Wade(2025)]{aich2025two}
Aich, A., A. B. Aich, \& B. Wade (2025).
Two new generators of Archimedean copulas with their properties.
\textit{Communications in Statistics -- Theory and Methods}, \textit{54}(17), 5566--5575.

\bibitem[Audibert and Tsybakov(2007)]{audibert2007fast}
Audibert, J.-Y., \& A. B. Tsybakov (2007).
Fast learning rates for plug-in classifiers.
\textit{Annals of Statistics}, \textit{35}(2), 608--633.

\bibitem[Barron(1993)]{barron1993universal}
Barron, A. R. (1993).
Universal approximation bounds for superpositions of a sigmoidal function.
\textit{IEEE Transactions on Information Theory}, \textit{39}(3), 930--945.

\bibitem[Bartlett and Mendelson(2002)]{bartlett2002rademacher}
Bartlett, P. L., \& S. Mendelson (2002).
Rademacher and Gaussian complexities: Risk bounds and structural results.
\textit{Journal of Machine Learning Research}, \textit{3}, 463--482.

\bibitem[Chapelle, Sch{\"o}lkopf and Zien(2009)]{chapelle2009semi}
Chapelle, O., B. Sch{\"o}lkopf, \& A. Zien (2009).
Semi-Supervised Learning (book review).
\textit{IEEE Transactions on Neural Networks}, \textit{20}(3), 542.

\bibitem[Chi et al.(2021)]{chi2021approximate}
Chi, J., B. Wang, H. Chen, L. Zhang, X. Li, \& J. Ouyang (2021).
Approximate continuous optimal transport with copulas.
\textit{International Journal of Intelligent Systems}, \textit{37}(8), 5354--5380.

\bibitem[Cortes and Vapnik(1995)]{cortes1995support}
Cortes, C., \& V. Vapnik (1995).
Support-vector networks.
\textit{Machine Learning}, \textit{20}(3), 273--297.

\bibitem[Devroye, Gy{\"o}rfi and Lugosi(1996)]{devroye1996probabilistic}
Devroye, L., L. Gy{\"o}rfi, \& G. Lugosi (1996).
\textit{A Probabilistic Theory of Pattern Recognition}.
Springer.

\bibitem[Embrechts, McNeil and Straumann(2002)]{embrechts2002correlation}
Embrechts, P., A. McNeil, \& D. Straumann (2002).
Correlation and dependence in risk management: Properties and pitfalls.
In M. A. H. Dempster (Ed.), \textit{Risk Management: Value at Risk and Beyond} (pp. 176--223). Cambridge University Press.

\bibitem[Farrell, Liang and Misra(2021)]{farrell2021deep}
Farrell, M. H., T. Liang, \& S. Misra (2021).
Deep neural networks for estimation and inference.
\textit{Econometrica}, \textit{89}(1), 181--213.

\bibitem[Fisher(1936)]{fisher1936use}
Fisher, R. A. (1936).
The use of multiple measurements in taxonomic problems.
\textit{Annals of Eugenics}, \textit{7}(2), 179--188.

\bibitem[Goodfellow et al.(2014)]{goodfellow2014generative}
Goodfellow, I., J. Pouget-Abadie, M. Mirza, B. Xu, D. Warde-Farley, S. Ozair, A. Courville, \& Y. Bengio (2014).
Generative adversarial nets.
In \textit{Advances in Neural Information Processing Systems} (Vol. 27, pp. 2672--2680).

\bibitem[Gordon et al.(2020)]{gordon2020combining}
Gordon, J., W. P. Bruinsma, A. Y. Foong, J. Requeima, Y. Dubois, \& R. E. Turner (2020).
Combining deep generative and discriminative models for Bayesian semi-supervised learning.
\textit{Pattern Recognition}, \textit{100}, 107156.

\bibitem[Hornik(1991)]{hornik1991approximation}
Hornik, K. (1991).
Approximation capabilities of multilayer feedforward networks.
\textit{Neural Networks}, \textit{4}(2), 251--257.

\bibitem[Jadbabaie et al.(2015)]{jadbabaie2015online}
Jadbabaie, A., A. Rakhlin, O. Shamir, \& K. Sridharan (2015).
Online optimization: Competing with dynamic comparators.
In \textit{Proceedings of the 18th International Conference on Artificial Intelligence and Statistics} (pp. 398--406).

\bibitem[Kingma and Welling(2013)]{kingma2013auto}
Kingma, D. P., \& M. Welling (2013).
Auto-Encoding Variational Bayes.
\textit{arXiv preprint arXiv:1312.6114}.

\bibitem[Koltchinskii(2001)]{koltchinskii2001rademacher}
Koltchinskii, V. (2001).
Rademacher penalties and structural risk minimization.
\textit{IEEE Transactions on Information Theory}, \textit{47}(5), 1902--1914.

\bibitem[Ling, Fang and Kolter(2020)]{ling2020deep}
Ling, C. K., F. Fang, \& J. Z. Kolter (2020).
Deep Archimedean copulas.
In \textit{Advances in Neural Information Processing Systems} (Vol. 33, pp. 10309--10320).

\bibitem[Lopez-Paz, Hern{\'a}ndez-Lobato and Ghahramani(2013)]{lopez2013gaussian}
Lopez-Paz, D., J. M. Hern{\'a}ndez-Lobato, \& Z. Ghahramani (2013).
Gaussian process vine copulas for multivariate dependence.
In \textit{Proceedings of the 30th International Conference on Machine Learning} (pp. 10--18).

\bibitem[Lu and Ghosh(2021)]{lu2021nonparametric}
Lu, L., \& S. Ghosh (2021).
Nonparametric estimation of multivariate copula using empirical Bayes method.
\textit{arXiv preprint arXiv:2112.10351}.

\bibitem[McCallum and Nigam(1998)]{mccallum1998comparison}
McCallum, A., \& K. Nigam (1998).
A comparison of event models for naive Bayes text classification.
In \textit{Proceedings of the AAAI-98 Workshop on Learning for Text Categorization} (pp. 41--48).

\bibitem[Miyato et al.(2018)]{miyato2018spectral}
Miyato, T., T. Kataoka, M. Koyama, \& Y. Yoshida (2018).
Spectral normalization for generative adversarial networks.
In \textit{International Conference on Learning Representations}.

\bibitem[Nagler, Schellhase and Czado(2017)]{nagler2017nonparametric}
Nagler, T., C. Schellhase, \& C. Czado (2017).
Nonparametric estimation of simplified vine copula models: Comparison of methods.
\textit{Dependence Modeling}, \textit{5}(1), 99--120.

\bibitem[Nelsen(2006)]{nelsen2006introduction}
Nelsen, R. B. (2006).
\textit{An Introduction to Copulas} (2nd ed.).
Springer.

\bibitem[Omelka, Gijbels and Veraverbeke(2009)]{omelka2009improved}
Omelka, M., I. Gijbels, \& N. Veraverbeke (2009).
Improved kernel estimation of copulas: Weak convergence and goodness-of-fit testing.
\textit{Annals of Statistics}, \textit{37}(5B), 3023--3058.

\bibitem[Rey and Roth(2012)]{rey2012copula}
Rey, M., \& V. Roth (2012).
Copula mixture model for dependency-seeking clustering.
In \textit{Proceedings of the 29th International Conference on Machine Learning} (pp. 927--934).

\bibitem[Schmidt-Hieber(2020)]{schmidt2020nonparametric}
Schmidt-Hieber, J. (2020).
Nonparametric regression using deep neural networks with ReLU activation function.
\textit{Annals of Statistics}, \textit{48}(4), 1875--1897.

\bibitem[Shrivastava and Chajewska(2023)]{shrivastava2023neural}
Shrivastava, H., \& U. Chajewska (2023).
Neural graphical models.
\textit{arXiv preprint arXiv:2210.00453}.

\bibitem[Silverman(1986)]{silverman1986density}
Silverman, B. W. (1986).
\textit{Density Estimation for Statistics and Data Analysis}.
Chapman and Hall.

\bibitem[Sklar(1959)]{sklar1959fonctions}
Sklar, A. (1959).
Fonctions de r{\'e}partition {\`a} n dimensions et leurs marges.
\textit{Publications de l'Institut de Statistique de l'Universit{\'e} de Paris}, \textit{8}, 229--231.

\bibitem[Smith et al.(1988)]{smith1988using}
Smith, J. W., J. E. Everhart, W. C. Dickson, W. C. Knowler, \& R. S. Johannes (1988).
Using the ADAP learning algorithm to forecast the onset of diabetes mellitus.
In \textit{Proceedings of the Symposium on Computer Applications in Medical Care} (pp. 261--265). IEEE Computer Society Press.

\bibitem[Tran, Blei and Airoldi(2015)]{tran2015copula}
Tran, D., D. Blei, \& E. M. Airoldi (2015).
Copula variational inference.
In \textit{Advances in Neural Information Processing Systems} (Vol. 28, pp. 3564--3572).

\bibitem[Tsybakov(2009)]{tsybakov2009introduction}
Tsybakov, A. B. (2009).
\textit{Introduction to Nonparametric Estimation}.
Springer.

\bibitem[van der Vaart(2012)]{vandervaart2012asymptotic}
van der Vaart, A. W. (2012).
\textit{Asymptotic Statistics}.
Cambridge University Press.

\bibitem[Wang and Zucker(2000)]{wang2000solving}
Wang, J., \& J. D. Zucker (2000).
Solving the multiple-instance problem: A lazy learning approach.
In \textit{Proceedings of the 17th International Conference on Machine Learning} (pp. 1119--1126).

\bibitem[Wasserman(2006)]{wasserman2006all}
Wasserman, L. (2006).
\textit{All of Nonparametric Statistics}.
Springer.

\bibitem[Wilson and Ghahramani(2010)]{wilson2010copula}
Wilson, A. G., \& Z. Ghahramani (2010).
Copula processes.
In J. Lafferty, C. K. I. Williams, J. Shawe-Taylor, R. Zemel, \& A. Culotta (Eds.), \textit{Advances in Neural Information Processing Systems 23} (pp. 2460--2468).

\bibitem[Duchi, Khosravi and Ruan(2018)]{duchi2018multiclass}
Duchi, J. C., K. Khosravi, \& F. Ruan (2018).
Multiclass classification, information, divergence, and surrogate risk.
\textit{Annals of Statistics}, \textit{46}(6B), 3246--3275.

\bibitem[Cox(1958)]{cox1958regression}
Cox, D. R. (1958).
The regression analysis of binary sequences.
\textit{Journal of the Royal Statistical Society: Series B (Methodological)}, \textit{20}(2), 215--242.

\bibitem[Bartlett, Foster and Telgarsky(2017)]{bartlett2017spectrally}
Bartlett, P. L., D. J. Foster, \& M. J. Telgarsky (2017).
Spectrally-normalized margin bounds for neural networks.
In \textit{Advances in Neural Information Processing Systems 30} (pp. 6241--6250).

\bibitem[Dvoretzky, Kiefer and Wolfowitz(1956)]{dvoretzky1956asymptotic}
Dvoretzky, A., J. Kiefer, \& J. Wolfowitz (1956).
Asymptotic minimax character of the sample distribution function and of the classical multinomial estimator.
\textit{Annals of Mathematical Statistics}, \textit{27}(3), 642--669.

\bibitem[Massart(1990)]{massart1990tight}
Massart, P. (1990).
The tight constant in the Dvoretzky--Kiefer--Wolfowitz inequality.
\textit{Annals of Probability}, \textit{18}(3), 1269--1283.

\end{thebibliography}
\end{document}